\documentclass[final,a4paper]{siamltex}

\usepackage{amsmath,amssymb,amsfonts}
\usepackage{color}
\usepackage{verbatim}
\usepackage{eepic,epic,epsfig,epstopdf}
\usepackage[]{graphics}
\usepackage{graphicx,inputenc}
\usepackage{subfigure}
\usepackage{changepage}
\usepackage{amsmath}
\usepackage{threeparttable}
\usepackage{extarrows}
\usepackage{url,color}
\usepackage{graphicx,subfig}
\usepackage{bm,color,url}
\usepackage{algorithm,algorithmic}
\usepackage{geometry}
\usepackage{setspace}
\usepackage{booktabs}
\usepackage{threeparttable}
\usepackage{multirow}
\geometry{left=1.3in,right=1.25in,top=1.3in,bottom=1.25in}

\usepackage[notref,notcite]{showkeys}

\usepackage{xcolor}
\colorlet{BLUE}{blue} 

\newtheorem{Theorem}{Theorem}[section]
\newtheorem{Lemma}[Theorem]{Lemma}
\newtheorem{Proposition}[Theorem]{Proposition}
\newtheorem{Definition}[Theorem]{Definition}
\newtheorem{Corollary}[Theorem]{Corollary}
\newtheorem{Remark}{Remark}[section]

\newcommand{\tabincell}[2]{\begin{tabular}{@{}#1@{}}#2\end{tabular}}

\def\relu{{\mathrm{ReLU}}}












\title{Deep Neural Networks with ReLU-Sine-Exponential  Activations Break  Curse of Dimensionality {\color{BLUE} in approximation} on  H\"{o}lder  Class}


\author{Yuling Jiao\thanks{School of Mathematics and Statistics, and
Hubei Key Laboratory of Computational Science, Wuhan University, Wuhan 430072, P.R. China. (yulingjiaomath@whu.edu.cn)}\quad\and
Yanming Lai \thanks{School of Mathematics and Statistics,  Wuhan University, Wuhan 430072, P.R. China. (laiyanming@whu.edu.cn)}\quad\and
 Xiliang Lu\thanks{School of Mathematics and Statistics, and
Hubei Key Laboratory of Computational Science, Wuhan University, Wuhan 430072, P.R. China. (xllv.math@whu.edu.cn)}\quad\and
Fengru Wang \thanks{School of Mathematics and Statistics,  Wuhan University, Wuhan 430072, P.R. China. (wangfr@whu.edu.cn)}\quad\and
Jerry Zhijian Yang \thanks{School of Mathematics and Statistics, and
Hubei Key Laboratory of Computational Science, Wuhan University, Wuhan 430072, P.R. China. (zjyang.math@whu.edu.cn)}\quad\and
Yuanyuan Yang \thanks{School of Mathematics and Statistics,  Wuhan University, Wuhan 430072, P.R. China. (yuanyuanyang@whu.edu.cn)}
}

\begin{document}

\maketitle

\begin{abstract}
In this paper, we construct neural networks with ReLU, sine and $2^x$ as activation functions.
For general continuous $f$ defined on  $[0,1]^d$ with continuity modulus $\omega_f(\cdot)$,  we construct  $\relu$-sine-$2^x$ networks   that enjoy   an approximation rate $\mathcal{O}\left(\omega_f(\sqrt{d})\cdot2^{-M}+\omega_{f}\left(\frac{\sqrt{d}}{N}\right)\right)$, where $M,N\in \mathbb{N}^{+}$ are the hyperparameters related to widths of the networks. As a consequence, we can construct  $\relu$-sine-$2^x$ network with the depth  $6$ and width $\max\left\{2d\left\lceil\log_2(\sqrt{d}\left(\frac{3\mu}{\epsilon}\right)^{1/{\alpha}})\right\rceil ,\right.$
$\left.2\left\lceil\log_2\frac{3\mu d^{\alpha/2}}{2\epsilon}\right\rceil+2\right\}$ that  approximates  $f\in \mathcal{H}_{\mu}^{\alpha}([0,1]^d)$ within a given tolerance $\epsilon >0$ measured in $L^p$ norm with $p\in[1,\infty)$, where $\mathcal{H}_{\mu}^{\alpha}([0,1]^d)$ denotes the  H\"{o}lder  continuous function class  defined on $[0,1]^d$ with order $\alpha \in (0,1]$ and constant $\mu > 0$.
Therefore, the  $\relu$-sine-$2^x$ networks overcome the curse of dimensionality {\color{blue} in approximation} on $\mathcal{H}_{\mu}^{\alpha}([0,1]^d)$.
In addition to its super expressive power, functions implemented by $\relu$-sine-$2^x$ networks are (generalized) differentiable, enabling us to apply SGD to train.\\
	
\noindent\textbf{2010 AMS Subject Classifications.}  41A99\\
\noindent\textbf{Keywords.} Deep Neural Network, Curse of Dimensionality,
	Approximation,  H\"{o}lder  Continuous Function.
\end{abstract}


\section{Introduction}
In recent years, deep learning has aroused great interest among mathematicians. How to approximate some common function classes with neural network is an important theoretical issue in this field. Some early works can be dated back to the 1980s \cite{cybenko1989approximation,hornik1989multilayer,hornik1991approximation,pinkus1999approximation}. These results are mainly focused on sigmoidal networks, i.e., the activation functions are sigmoidal functions. Recently, $\relu$ networks are attached great interest due to its superior empirical performances in nowadays learning tasks \cite{krizhevsky2012imagenet}. Comparing to sigmoidal networks, $\relu$ networks do not suffer from the vanishing gradient problem \cite{glorot2011deep}. Moreover, the $\relu$ is easy to compute and improves the ability of  data representation  \cite{bengio2013representation}. In \cite{yarotsky2017error}, Yarotsky firstly shows how to construct a $\relu$ network to achieve any approximation accuracy by the idea of Taylor expansion. Suzuki then shows that the $\relu$ networks can also be built up based on the classical approximation results of B-spline \cite{suzuki2018adaptivity}. From a different point of view, Shen et al. construct $\relu$ networks to achieve any given  accuracy by  explicitly adjusting the depths and widths   \cite{shen2019deep,lu2020deep}. Readers are also referred to some other excellent works related to ReLU networks \cite{schmidt2020nonparametric,guhring2020error,liang2016deep,lu2017expressive,gribonval2019approximation}.

Unfortunately, all those results of $\relu$ networks suffer from the curse of dimensionality {\color{blue} in approximation} \cite{donoho2000high}, which is a term commonly used to describe of the difficulty of the problem depending on
 of the  dimension exponentially. In the case of network approximation, it is usually reflected in the fact that the size of the network is exponentially dependent on the approximation error. In fact, Yarotsky already proves that $\relu$ networks cannot escape the curse of dimensionality {\color{blue} in approximation} by constructing a lower bound for network size, which is based on the VC dimension of $\relu$ networks \cite{yarotsky2017error}.

\subsection{Main Contributions}
In this paper, we construct neural networks achieving super expressive power  with ReLU, sine, and $2^x$ as activation functions.
The
constructed ReLU-sine-$2^x$  networks
break   the curse of dimensionality {\color{blue} in approximation} on  H\"{o}lder  continuous function class  defined on $[0,1]^d$ and can be trained by SGD. The main contributions  of
this paper are summarized as follows.
 Let $M,N \in \mathbb{N}^{+}$ be hyperparameters related to   width,  we construct  deep networks $\Phi$ with  $\relu$-sine-$2^x$ activation functions   that enjoy  following approximation rate.
\begin{itemize}
\item For general continuous function $f$ defined on  $[0,1]^d$ with continuity modulus $\omega_f(\cdot)$, we have
    $$\|f-\Phi\|_{L^p}\leq\mathcal{O}\left(\omega_f(\sqrt{d})\cdot2^{-M}+\omega_{f}\left(\frac{\sqrt{d}}{N}\right)\right),$$  \sloppy where $p\in [1,\infty),$   the  depth $\mathcal{L}(\Phi) = 6$ and  the width $\mathcal{W}(\Phi) =$
    $\max\left\{2d\lceil\log_2N\rceil ,2M\right\}$ as in Theorem \ref{Lp general result}.
    And
$$\|f-\Phi\|_{L^{\infty}}\leq \mathcal{O}\left(\omega_{f}(\sqrt{d})\cdot2^{-M}+\omega_{f}\left(\frac{\sqrt{d}}{N}\right)\right),$$  \sloppy where the depth $\mathcal{L}(\Phi)=2d+6,$ and width $\mathcal{W}(\Phi)=3^d\left(\max\left\{2d\lceil\log_2N\rceil ,2M\right\}+4\right)$ as in Theorem \ref{pointwise general result}.
    \item
 If $f\in \mathcal{H}_{\mu}^{\alpha}([0,1]^d)$,  the  H\"{o}lder   function class
with order $\alpha \in (0,1]$ and constant $\mu > 0$, then we obtain
  $$\|f-\Phi\|_{L^p}\leq \epsilon$$ as long as $p\in[1,\infty)$,
  the depth  $\mathcal{L}(\Phi) =6$ and   width $$\mathcal{W}(\Phi)=\max\left\{2d\left\lceil\log_2(\sqrt{d}\left(\frac{3\mu}{\epsilon}\right)^{1/{\alpha}})\right\rceil ,2\left\lceil\log_2\frac{3\mu d^{\alpha/2}}{2\epsilon}\right\rceil+2\right\}.$$
It implies that the constructed $\Phi$ breaking the curse of dimensionality {\color{blue} in approximation} on $\mathcal{H}_{\mu}^{\alpha}([0,1]^d)$,
  see Corollary \ref{Lp Lip result} and  Corollary \ref{pointwise Lip result}.
\item
  Functions implemented by $\Phi$ are (generalized) differentiable
   \cite{clarke1990optimization,berner2019towards}, thus, they can be trained by   first order optimization algorithms such as stochastic gradient descent method.
\end{itemize}

\subsection{Related Works}
To avoid curse of dimensionalityc {\color{blue} in approximation}, one needs  more regularity or structures on the target functions. For compositional functions \cite{poggio2017and}, there exists a network with  smooth, non-polynomial activation function, constant depth and width $\mathcal{O}\left(\epsilon^2\right)$ to achieve error $\epsilon$. The functions defined on low dimensional submanifolds are studied in \cite{chui2018deep,schmidt2019deep,shaham2018provable,chen2019efficient}. In \cite{shaham2018provable} it is shown that for functions in $C^2(\Gamma)$, where $\Gamma$ is a smooth $m$-dimensional manifold, there exists a $\relu$ network with depth $4$ and the number of units $\mathcal{O}\left(\epsilon^{-m/2}\right)$ to achieve error $\epsilon$. The functions with finite Fourier moment conditions are studied in \cite{barron1993universal,siegel2020approximation}. In \cite{barron1993universal} it is shown that there exists a shallow sigmoidal network with depth 2 and width $\mathcal{O}\left(\frac{1}{\epsilon}\right)$ to achieve error $\epsilon$. Smooth functions  are studied in \cite{lu2020deep,yarotsky2019phase, montanelli2020error,weinan2018exponential}. In \cite{lu2020deep} it is shown that for $f\in C^s\left([0,1]^d\right)$, to achieve an error $\mathcal{O}\left({\|f\|_{C^{s}\left([0,1]^{d}\right)} N^{-2 s / d} L^{-2 s / d}}\right)$,  the depth and width of the  $\relu$ network $\Phi$ are required to be  $\mathcal{L}(\phi) =\mathcal{O}\left((L+2) \log _{2}(4 L)+2 d\right)$ and  $\mathcal{W}(\Phi) =\mathcal{O}\left((N+2) \log _{2}(8 N)\right)$. Piecewise smooth functions are studied in \cite{liang2016deep,petersen2018optimal}. In \cite{petersen2018optimal} ReLU networks with constant depth and number of weights $\mathcal{O}\left(\epsilon^{-2(d-1)/\beta}\right)$, where $\beta$ characterizes smoothness of target functions, are constructed to achieve error $\epsilon$. For analytic functions on $(-1,1)^d$, there exists a ReLU network with depth $L$ and width $d+4$ to achieve accuracy $\mathcal{O}\left(e^{-d\delta (e^{-1}L^{1/{2d}}-1)}\right)$ for any $\delta>0$ \cite{weinan2018exponential}. For band-limited functions, there exists a $\relu$ network $\Phi$ with depth $\mathcal{L}(\Phi)=\mathcal{O}\left(\log_{2}^2(\frac{1}{\epsilon})\right)$ and width $\mathcal{W}(
\Phi)=\mathcal{O}\left(\frac{1}{\epsilon^2}\log_{2}^2(\frac{1}{\epsilon})\right)$ to achieve error $\epsilon$ \cite{montanelli2019deep}. For functions in Korobov spaces, there exists a $\relu$ network $\Phi$ with depth $\mathcal{L}(\Phi) = \mathcal{O}\left(\log _{2} \frac{1}{\epsilon }\right)$ and the number of units $\mathcal{O}\left(\frac{1}{\epsilon^2}\left(\log _{2} \frac{1}{\epsilon}\right)^{\frac{3}{2}(d-1)+1} \right)$ to achieve error $\epsilon$ \cite{montanelli2019new}. For measure $\mu$ whose support has a Minkowski dimension $d<D$, where $D$ is the ambient dimension, the approximation error measured in the norm $L^{\infty}(\mu)$ is roughly $\mathcal{O}(W^{-\beta/d})$ where $\beta$ characterizing smoothness of target functions and $W$ is  the number of parameters of the  ReLU network \cite{nakada2019adaptive}. For holomorphic mappings, the approximation rate of ReLU network is $\mathcal{O}\left(e^{-bW^{1/{(d+1)}}}\right)$ with $b$ depending on the domain of analyticity and $W$ being the number of weights \cite{opschoor2019exponential}.

Although these works have achieved great achievements, an interesting question we can still ask is that for functions without much additional regularity conditions, can we construct an approximation network which does not suffer from the curse of dimensionality {\color{blue} in approximation}?
For  H\"{o}lder  continuous functions, Shen et al. gives a positive answer by building a $\relu-$floor network overcoming curse of dimensionality {\color{blue} in approximation} \cite{shen2020deep}.
The size of their network can be adjusted by setting different values of depth and width.
For example, to approximate a  H\"{o}lder  continuous function on $[0,1]^d$ with H$\mathrm{\ddot{o}}$lder constant $\mu$ and order $\alpha$, there exists a $\relu-$floor network with depth $64d+3$ and width $\max\left\{d,5\sqrt{d}\left(\frac{3\mu}{\epsilon}\right)^{1/\alpha}+13\right\}$, where $\epsilon$ is the given approximation tolerance. However, it is a pity that the existence of floor activations exhibit using the working horse  SGD \cite{robbins1951stochastic,lecun2012efficient}  for training since the gradient vanishes  by chain rule. Note that  non-piecewise constant and continuous activation functions  have also been  proposed in   \cite{shen2020deep} in order to  use SGD to  train.

The rest of the paper is organized as follows.  In Section 2,  we give some notations and definitions.  In Section 3, we  present  details on
the construction of the ReLU-sine-$2^x$  networks with super expressive power.
We give a conclusion and a short discussion in Section 4.

\section{Notations}
The continuity modulus $\omega_f(r)$ of a function $f$ is defined as
\begin{equation*}
\omega_f(r)=\sup_{\|x-y\|_2\leq r}|f(x)-f(y)|.
\end{equation*}
 For $\mu>0$ and $\alpha\in(0,1]$, the set of  H\"{o}lder  continuous function on $[0,1]^{d}$ with constant $\mu$ and order $\alpha$ is defined by
\begin{equation*}
\mathcal{H}_{\mu}^{\alpha}([0,1]^{d})=\{f:|f(x)-f(y)|\leq \mu\|x-y\|_2^{\alpha},\quad \forall x,y\in [0,1]^{d}\}.
\end{equation*}
A function $\mathbf{f}: \mathbb{R}^{d} \rightarrow \mathbb{R}^{N_{L}}$ implemented by a neural network is defined by
\begin{equation*}
\begin{array}{l}
\mathbf{f}_{0}(\mathbf{x})=\mathbf{x},\\
\mathbf{f}_{\ell}(\mathbf{x})=\varrho_{\ell}\left(A_{\ell} \mathbf{f}_{\ell-1}+\mathbf{b}_{\ell}\right) \quad \text { for } \ell=1, \ldots, L-1, \\
\mathbf{f}=\mathbf{f}_{L}(\mathbf{x}):=A_{L}\mathbf{f}_{L-1}+\mathbf{b}_{L},
\end{array}
\end{equation*}
where $A_{\ell}\in\mathbb{R}^{N_{\ell}\times N_{\ell-1}}$,
$\mathbf{b}_{\ell}\in\mathbb{R}^{N_{\ell}}$ and the activation function $\varrho_{\ell}$ is understood to act component-wise (it is allowed that there are different activation functions in different layers). For simplicity we also use $\mathbf{f}$ to present this network. $L$ is called the depth of the network and $\max\{N_{\ell},\ell=0,\cdots,L\}$ is called the width of the network. We will use $\mathcal{L}(\mathbf{f})$ and $\mathcal{W}(\mathbf{f})$ to denote the depth and width of the neural network $\mathbf{f}$, respectively. $\sum_{\ell=1}^{L} N_{\ell} $ is called number of unites of $\mathbf{f}$ and  $\{A_{\ell},\mathbf{b}_{\ell}\}$ are called the weight parameters.

We now introduce the concept of VC-dimension \cite{vapnik2015uniform}, which plays an important role in the research of neural network approximation.
Let $S\subset X$ be a finite subset and $H \subset\{h: X \rightarrow\{0,1\}\}$. We define by $H_{S}:=\left\{h_{\mid S}: h \in H\right\}$ the restriction of $H$ to $S$.
\begin{Definition}\label{scatter}
	The growth function of $H$ is defined by
	$$\mathcal{G}_{H}(m):=\max \left\{\left|H_{S}\right|: S \subset X,|S|=m\right\}, \quad \text { for } m \in \mathbb{N}.$$	
\end{Definition}
It is clear that for every set $S$ with $|S|=m$, we have that $\left|H_{S}\right| \leq 2^{m}$ and hence $\mathcal{G}_{H}(m) \leq 2^{m}$. We say that
a set $S$ with $|S| = m$ for which $\left|H_{S}\right|=2^{m}$ is shattered by $H$.

\begin{Definition}
	VCdim(H) is defined to be the largest integer $m$ such
	that there exists $S\subset X$ with $|S|=m$ that is shattered by $H$. In other words,
	$$\operatorname{VCdim}(H):=\max \left\{m \in \mathbb{N}: \mathcal{G}_{H}(m)=2^{m}\right\}.$$
\end{Definition}
VC-dimension reflects the capacity of a class of functions to perform binary classification of
points. The larger VC-dimension is, the stronger the capability to perform binary classification is. For more discussion of VC-dimension, readers are referred to \cite{anthony2009neural}.

\section{Construction of network}
In this section, we give detail construction of the $\relu$-sine-$2^x$  networks that  enjoy  super expressive power and  can be trained by SGD.
We will  construct $\relu$-sine-$2^x$  networks with depth $6$ and depth $2d+6$ that  approximate functions in   $L^{p}$ norm  $p\in[1,\infty)$ and  $L^{\infty}$ norm,  respectively.

{\color{blue}Inspired by Lemma 7.2 in \cite{anthony2009neural}, which shows that sine functions class enjoys an infinite VC-dimension,
we give a Lemma \ref{relu and sine vcdim} below, which plays a key role in our network construction.}

\begin{Lemma} \label{relu and sine vcdim}
{\color{blue}Define
\begin{align*}
\mathcal{N}:=&\{g:g \text{ is a neural network function with depth 3 and width 2}\\
&\text{ and its activation functions being } \relu \text{ and sine} \}.
\end{align*}
Then $\{2^i\}_{i=1}^{\infty}$ are scattered by $\mathcal{N}$, i.e., for any given $n\in \mathbb{N}^{+}$, there exist $g\in\mathcal{N}$ that  interpolates
$(2^i,b_i)$, $i=1,...,n $ with $b_i\in \{0,1\}$.}
\end{Lemma}
\begin{proof}
	The proof is based on the bit-extraction technique. For any $k\in \mathrm{N}^{+}$, we demonstrate that there exists a function $g$ in $\mathcal{N}$ scattering $\{2^i\}_{i=1}^{k}$. Let $b=\sum_{i=1}^{k}b_i2^{-i-1}+2^{-(k+2)}$, where $b_i \in \{0,1\}, i = 1,...k$. Set $x_i=2^i$, then
	\begin{equation*}
	\begin{split}
	\sin(2\pi bx_i)&=\sin\left(\pi\sum_{j=1}^{k}b_j2^{i-j}+\pi\cdot2^{i-k-1}\right)\\
	&=\sin\left(b_i\pi+\pi\cdot\sum_{j=i+1}^{k}b_j2^{i-j}+\pi\cdot2^{i-k-1}\right),
	\end{split}
	\end{equation*}
	where the second equality is due to the periodicity of sine function. Since
	\begin{equation*}
	\left(\frac{1}{2}\right)^{k}\leq\left(\frac{1}{2}\right)^{k+1-i}
	\leq\sum_{j=i+1}^{k}b_j2^{i-j}+2^{i-k-1}\leq1-\left(\frac{1}{2}\right)^{k+1-i}\leq1-\left(\frac{1}{2}\right)^{k},
	\end{equation*}
	we have
	\begin{equation*}
	\sin(2\pi bx_i)\in\left\{\begin{array}{ll}
	\left[\sin\left(\frac{1}{2^k}\right),1\right], & b_i=0,\\
	\left[-1,-\sin\left(\frac{1}{2^k}\right)\right], &  b_i=1.
	\end{array}\right.
	\end{equation*}
	Define
	\begin{equation*}
	\begin{split}
	f(x)&=\relu\left(\frac{1}{2\sin(1/2^k)}x+\frac{1}{2}\right)-\relu\left(\frac{1}{2\sin(1/2^k)}x-\frac{1}{2}\right)\\
	&=\left\{\begin{array}{ll}
	1, & x>\sin\left(\frac{1}{2^k}\right), \\
	\frac{1}{2\sin(1/2^k)}x+\frac{1}{2}, &  -\sin\left(\frac{1}{2^k}\right)\leq x\leq\sin\left(\frac{1}{2^k}\right),\\
	0, & x<-\sin\left(\frac{1}{2^k}\right),
	\end{array}\right.
	\end{split}
	\end{equation*}
	{\color{blue} and $g(x)=f(\sin(2\pi bx))$, then it is easy to check that
	\begin{equation*}
	g(x_i) = f(\sin(2\pi bx_i))=\left\{\begin{array}{ll}
	1, & b_i=0, \\
	0, &  b_i=1,
	\end{array}\right.\quad i=1,2,\cdots,k.
	\end{equation*}}
	The above equation and Definition \ref{scatter} imply that  $\{2^i\}_{i=1}^{k}$ are scattered by $g(x) \in \mathcal{N}$.
\end{proof}


Let $N\in \mathbb{N}^{+}, \delta >0$, define by a small region
\begin{equation}\label{eqn:omega}
\begin{split}
\Omega(N,\delta,d)=&\left\{\mathbf{x} = [x_1,...,x_i,...,x_d]^T \in\Omega=[0,1]^d:\text{there exists a  coordinate } i
\right.\\
&\left.\text{ such that }x_i\in\left(\frac{j}{N}-\delta,\frac{j}{N}\right),\quad j=1,2,\cdots, N\right\}.
\end{split}
\end{equation}
We will prove the approximation to the network outside this region first and go back to this region later.

\begin{Theorem} \label{almost pointwise general result}
	Let $M,N\in \mathbb{N}^{+}$, $\delta>0$. For any $f\in C\left([0,1]^d\right)$ with maximum  $\overline{f}$ and minimum $\underline{f}$, there exists a $\relu$-sine-$2^x$ network $\Phi$ with $\mathcal{L}(\Phi)=6, \mathcal{W}(\Phi)=\max\left\{2d\lceil\log_2N\rceil ,2M\right\}$ such that for all $\mathbf{x}\in[0,1]^d$, $\underline{f}\leq\Phi(\mathbf{x})\leq\overline{f}$ and
	\begin{equation*}
	|f(\mathbf{x})-\Phi(\mathbf{x})|\leq \omega_f(\sqrt{d})\cdot2^{-M}+\omega_{f}\left(\frac{\sqrt{d}}{N}\right),\quad \forall\mathbf{x}\in[0,1]^d\backslash \Omega(N,\delta,d).
	\end{equation*}
\end{Theorem}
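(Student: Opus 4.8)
The plan is to construct a function $\Phi$ that, outside $\Omega$, is \emph{piecewise constant} on the uniform grid of mesh $1/N$, taking on each grid cube an $M$-bit quantization of $f$ at a corner of that cube; the two error terms then appear as, respectively, the oscillation of $f$ across a cube and the quantization error. Precisely, partition $[0,1]^d$ into cubes $Q_{\mathbf{m}}=\prod_{i=1}^d[m_i/N,(m_i+1)/N]$ indexed by $\mathbf{m}\in\{0,\dots,N-1\}^d$, with representative corner $\mathbf{x}_{\mathbf{m}}=\mathbf{m}/N$. If $\overline f=\underline f$ there is nothing to prove, so assume $\overline f>\underline f$; put $y_{\mathbf{m}}=(f(\mathbf{x}_{\mathbf{m}})-\underline f)/(\overline f-\underline f)\in[0,1]$ and let $\xi_{\mathbf{m},1},\dots,\xi_{\mathbf{m},M}\in\{0,1\}$ be its truncated binary expansion, so $\sum_{j=1}^M\xi_{\mathbf{m},j}2^{-j}\in[0,1]$ and $|y_{\mathbf{m}}-\sum_j\xi_{\mathbf{m},j}2^{-j}|\le 2^{-M}$. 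Aiming for $\Phi(\mathbf{x})=\underline f+(\overline f-\underline f)\sum_{j=1}^M\xi_{\mathbf{m},j}2^{-j}$ for $\mathbf{x}\in Q_{\mathbf{m}}\setminus\Omega$, the bound $\underline f\le\Phi\le\overline f$ is automatic, and since $\|\mathbf{x}-\mathbf{x}_{\mathbf{m}}\|_2\le\sqrt d/N$ and $\overline f-\underline f\le\omega_f(\sqrt d)$,
\begin{equation*}
|f(\mathbf{x})-\Phi(\mathbf{x})|\le|f(\mathbf{x})-f(\mathbf{x}_{\mathbf{m}})|+(\overline f-\underline f)\,2^{-M}\le\omega_f\!\left(\frac{\sqrt d}{N}\right)+\omega_f(\sqrt d)\,2^{-M},
\end{equation*}
which is the asserted estimate. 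It remains to realize this $\Phi$ by a $\relu$-sine-$2^x$ network of depth $6$ and width $\max\{2d\lceil\log_2N\rceil,2M\}$.

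I would assemble the network from three blocks, the point being that the slab index $m_i=\lfloor Nx_i\rfloor$ is read off \emph{in parallel} by $O(\log N)$ sine units rather than by a logarithmic-depth comparison network. \emph{Block 1 — index recovery} ($2$ activation layers, widths $dK$ then $2dK$, where $K=\lceil\log_2N\rceil$). For $\mathbf{x}\in[0,1]^d\setminus\Omega$ let $c_{i,1},\dots,c_{i,K}$ be the binary digits of $m_i$. Each digit is a step function of $x_i$ whose jump set lies in $\{\ell/N\}$, and on $[0,1]^d\setminus\Omega$ the coordinate $x_i$ stays in a $\delta$-shrunk slab $[m/N,(m+1)/N-\delta]$; so $c_{i,k}$ is recovered by a single sine unit $x_i\mapsto\sin(2\pi N\,2^{k-1-K}x_i+\phi_{i,k})$, with the phase $\phi_{i,k}$ (of order $N\delta\,2^{k-K}$) chosen so that the output lies in $[\sigma_{i,k},1]$ when $c_{i,k}=0$ and in $[-1,-\sigma_{i,k}]$ when $c_{i,k}=1$ for some $\sigma_{i,k}=\sigma_{i,k}(N,\delta)>0$, followed by a two-$\relu$ hard clamp that returns $c_{i,k}$ exactly. \emph{Block 2 — hashing} ($1$ activation layer of width $1$). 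The integer $z=1+\sum_{i=1}^d m_iN^{i-1}$ with $m_i=\sum_{k=1}^K2^{k-1}c_{i,k}$ is affine in Block 1's outputs and is a bijection from $\{0,\dots,N-1\}^d$ onto $\{1,\dots,N^d\}$; a single $2^x$ unit outputs $2^z$. \emph{Block 3 — table lookup} ($2$ activation layers of widths $M$ then $2M$, plus the affine output). For each $j\in\{1,\dots,M\}$, Lemma \ref{relu and sine vcdim} (with $n=N^d$) supplies a $\relu$-sine network $g_j(y)=(\text{two-}\relu\text{ clamp})\bigl(\sin(2\pi b_jy)\bigr)$ with $g_j(2^z)=\xi_{\mathbf{m}(z),j}$ for all $z\in\{1,\dots,N^d\}$; running these in parallel and then forming $\underline f+(\overline f-\underline f)\sum_{j=1}^Mg_j(2^z)2^{-j}$ (affine in Block 3's $\relu$ outputs) gives the output. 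The resulting widths $d\to dK\to 2dK\to 1\to M\to 2M\to 1$ give depth $6$ and width $\max\{2d\lceil\log_2N\rceil,2M\}$, and on $[0,1]^d\setminus\Omega$ the network computes exactly the $\Phi$ above.

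The error accounting and the depth/width tally are routine. The one delicate step — and the reason the excised region $\Omega$ enters the statement — is the phase calibration in Block 1: one must choose $\phi_{i,k}$ so that on the $\delta$-shrunk slabs the sine argument stays inside $[\phi,\pi-\phi]$ or $[\pi+\phi,2\pi-\phi]$ for a suitable $\phi>0$, i.e.\ the sine output is uniformly bounded away from $0$ with the correct sign, which is precisely what turns ``sine then hard clamp'' into an \emph{exact} bit extractor with no off-by-one ambiguity at a slab boundary; a cosmetic adjustment deals with the single face $\{x_i=1\}$, and one may take $\delta<1/N$ without loss (otherwise $[0,1]^d\setminus\Omega$ is empty). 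Everything after Block 1 is bookkeeping together with a direct invocation of Lemma \ref{relu and sine vcdim}, whose bit-extraction mechanism is exactly what lets a constant-depth, $O(d\log N)$-wide network emulate the size-$N^d$ value table $\mathbf{m}\mapsto f(\mathbf{x}_{\mathbf{m}})$.
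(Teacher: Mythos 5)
Your proposal is correct and follows essentially the same route as the paper's own proof: extract the cell index of each coordinate with frequency-doubled sine units followed by two-$\relu$ clamps (the paper's $\mathbf{\Phi}_1$), hash the resulting bits through a single $2^x$ unit (the paper's $\Phi_2$), and read off the $M$-bit quantized value of the normalized $f$ at the cell's corner via the bit-extraction networks of Lemma \ref{relu and sine vcdim} (the paper's $\Phi_3$), giving the same depth $6$ and width $\max\{2d\lceil\log_2 N\rceil,2M\}$ and the same two-term error split. The only differences are cosmetic: you hash in base $N$ (a bijection onto $\{1,\dots,N^d\}$) rather than base $2$ on the bit vector, you calibrate a phase per sine unit where the paper uses a single shift $\delta$ and proves injectivity of the cell-to-bit map by induction, and your Block-1 bit index $k$ is labeled in the reverse order of the frequencies you assign (harmless relabeling).
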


{\color{blue}
We list the main ideas and steps before the complete proof. The domain $[0,1]^d\backslash\Omega(N,\delta,d)$ is divided into some uniform small cubes $\{\Omega_{\alpha}\}_{\alpha}$ with size parameter $N$. We will construct an approximation network $\Phi$ which is constant on each $\Omega_{\alpha}$. It is enough to approximate $f$ at grid points of the cubes $\{\Omega_{\alpha}\}_{\alpha}$, then approximation on  $[0,1]^d\backslash\Omega(N,\delta,d)$ can be obtained by using continuity modulus and controlling the size of $N$. To see that we first construct two maps $\Phi_1$ and $\Phi_2$, which serve to map each $\Omega_{\alpha}$ to a specific integer. Then we can approximate $f$ at grid points of the cubes $\{\Omega_{\alpha}\}_{\alpha}$ by applying the tool of binary representation. Specifically, we introduce $\phi_{3,j}$ to allocate the integers acquired by $\Phi_1$ and $\Phi_2$ to 0 or 1, depending on the value of the $j$th bit of binary representation of function value at the grid points. Combining them together we have a network $\Phi_3$ which can approximate $f$ at grid points of the cubes $\{\Omega_{\alpha}\}_{\alpha}$.
}

\begin{proof}
Our construction is similar to \cite{shen2020deep}. First we divide the region $[0,1]^d$ into $N^d$ small cubes with the same size. For $\mathbf{\alpha}\in\{0,1,2,\cdots,N-1\}^d$, define by
\begin{equation*}
\Omega_{\mathbf{\alpha}}(N,\delta,d)=\left\{\mathbf{x}\in[0,1]^d:
x_i\in\left[\frac{\alpha_i}{N},\frac{\alpha_i+1}{N}-\delta\right],\quad i=1,2,\cdots,N\right\}.
\end{equation*}
Then
\begin{equation*}
[0,1]^d=\bigcup_{\mathbf{\alpha}\in\{0,1,2,\cdots,N-1\}^d}\Omega_{\mathbf{\alpha}}(N,\delta,d)\bigcup \Omega(N,\delta,d).
\end{equation*}

{\color{blue}Let $N_1=\lceil\log_2N\rceil$.} We can build a network approximating the following periodical function
\begin{equation*}
{\color{blue}h(x)}=\left\{\begin{array}{ll}
1, & x\in\left[2k\cdot\frac{2^{N_1}}{N},(2k+1)\cdot\frac{2^{N_1}}{N}\right)\\
0, & x\in\left[(2k+1)\cdot\frac{2^{N_1}}{N},(2k+2)\cdot\frac{2^{N_1}}{N}\right)
\end{array}\right.,\quad{\color{blue} k=0,1,2,\cdots.}
\end{equation*}
To see that we consider
\begin{equation*}
\begin{split}
\phi_{1,1}(x)&:=\frac{1}{2\sin\delta}\relu\left(x+\sin\delta\right)-\frac{1}{2\sin\delta}\relu\left(x-\sin\delta\right)\\
&=\left\{\begin{array}{ll}
1, & x>\sin\delta,\\
\frac{1}{2\sin\delta}x+\frac{1}{2}, &  -\sin\delta\leq x\leq \sin\delta,\\
0, & x<-\sin\delta,
\end{array}\right.
\end{split}
\end{equation*}
and
\begin{equation*}
\begin{split}
\phi_{1,2}(x)&:=\phi_{1,1}\left(\sin\left(\frac{N\pi}{2^{N_1}}x+\delta\right)\right)\\
&=\left\{\begin{array}{ll}
1, & x\in\left[2k\cdot\frac{2^{N_1}}{N},(2k+1)\cdot\frac{2^{N_1}}{N}-2\delta\right],\\
0, & x\in\left[(2k+1)\cdot\frac{2^{N_1}}{N},(2k+2)\cdot\frac{2^{N_1}}{N}-2\delta\right],\\
\frac{1}{2\sin\delta}\sin\left(\frac{N\pi}{2^{N_1}}x+\delta\right)+\frac{1}{2}, &  \text{otherwise,}
\end{array}\right.
\end{split}
\end{equation*}
for ${\color{blue} k=0,1,2,\cdots}$ and $0\leq \delta\leq \frac{1}{2}$. It can be easily verified that $\phi_{1,2}$ is an approximation of the periodical function {\color{blue}$h(x)$}. We next define
\begin{equation*}
\phi_{1,3}^n(x):=\phi_{1,2}(2^nx),\quad {\textrm{for}}\; n=1,2,\cdots,N_1,
\end{equation*}
and let
\begin{equation} \label{Phi1}
\mathbf{\Phi}_1(\mathbf{x}):=(\phi_{1,3}^1(x_1),\cdots,\phi_{1,3}^{N_1}(x_1),
\phi_{1,3}^1(x_2),\cdots,\phi_{1,3}^{N_1}(x_2),\cdots,\phi_{1,3}^1(x_d),\cdots,\phi_{1,3}^{N_1}(x_d)){\color{blue}.}
\end{equation}

We claim that $\mathbf{\Phi}_1$ maps each $\Omega_{\mathbf{\alpha}}(N,\delta,d)$ to a corresponding $N_1d$-dimensional vector and for $\mathbf{\alpha}\neq\mathbf{\beta}$, $	\mathbf{\Phi_1}(\Omega_{\mathbf{\alpha}}(N,\delta,d))\bigcap \mathbf{\Phi_1}(\Omega_{\mathbf{\beta}}(N,\delta,d))=\varnothing$, which will be proved in Lemma $\ref{Phi1 property}$.

For any $\mathbf{\bar{\alpha}}\in\mathbb{R}^{N_1d}$, we define
\begin{equation*}
\Phi_2(\mathbf{\bar{\alpha}})=2^{\sum_{i=1}^{N_1d}\bar{\alpha}_i2^{i-1}+1},
\end{equation*}
then
\begin{equation*}
\Phi_2\circ\mathbf{\Phi_1}(\mathbf{x})\in\{2,4,8,\cdots\},\quad \mathbf{x}\in[0,1]^d\backslash \Omega(N,\delta,d).
\end{equation*}
Denote $\overline{f} $ and $\underline{f}$ as the maximum and minimum of $f$ in $[0,1]^d$, respectively. Define
\begin{equation*}
\widetilde{f}(\mathbf{x})=\frac{f(\mathbf{x})-\underline{f}}{\overline{f}-\underline{f}}{\color{blue}.}
\end{equation*}
It is clear that $0\leq\widetilde{f}\leq1$. For any $\mathbf{\alpha}\in\{0,1,2,\cdots,N-1\}^d$, we consider the grid points $\frac{\mathbf{\alpha}}{N}$ in $[0,1]^d$.
Then we express $\widetilde{f}$ in the following form of binary decomposition, {\color{blue}that is, for $\alpha\in\{0,1,\cdots,N-1\}^d$, there exists $\{a_{i_{\alpha}j}\}_{j=1}^{\infty}$ with $a_{i_{\alpha}j}\in\{0,1\}$ such that
\begin{equation*}
\widetilde{f}\left(\frac{\mathbf{\alpha}}{N}\right)=\sum_{j=1}^{\infty}a_{i_{\alpha}j}2^{-j},
\end{equation*}}
where {\color{blue}
\begin{align*}
i_{\alpha}=\sum_{k=1}^{N_1d}\left(\mathbf{\Phi}_1\left(\frac{\alpha}{N}\right)\right)_k2^{k-1}+1\in\{1,2,\cdots,N^d\}.
\end{align*}}
For $j=1,2,\cdots,M$, \sloppy by Lemma $\ref{relu and sine vcdim}$, there exists network  $\phi_{3,j}$ with ReLU and sine activations  such that
{\color{blue}
\begin{equation*}
\phi_{3,j}\left(2^{i_{\alpha}}\right)=a_{i_{\alpha}j},\quad\alpha\in\{0,1,\cdots,N-1\}^d.
\end{equation*}}
Define by
\begin{equation*}
\Phi_3=\sum_{j=1}^{M}\phi_{3,j}2^{-j} \;\;\textrm{and} \;\; \widetilde{\Phi}=\Phi_3\circ\Phi_2\circ\mathbf{\Phi}_1.
\end{equation*}
Then for $\mathbf{x}\in\Omega_{\mathbf{\alpha}}$, we have
\begin{equation*}
\begin{split}
|\widetilde{\Phi}(\mathbf{x})-\widetilde{f}(\mathbf{x})|&\leq |\widetilde{\Phi}(\mathbf{x})-\widetilde{f}\left(\frac{\mathbf{\alpha}}{N}\right)|+|\widetilde{f}\left(\frac{\mathbf{\alpha}}{N}\right)-\widetilde{f}(\mathbf{x})|\\
&\leq \sum_{j=M+1}^{\infty}a_{ij}2^{-j}+\omega_{\widetilde{f}}\left(\frac{\sqrt{d}}{N}\right)
=2^{-M}+\omega_{\widetilde{f}}\left(\frac{\sqrt{d}}{N}\right).
\end{split}
\end{equation*}
Hence for all $\mathbf{x}\in[0,1]^d\backslash \Omega(N,\delta,d)$,
\begin{equation*}
|\widetilde{\Phi}(\mathbf{x})-\widetilde{f}(\mathbf{x})|\leq 2^{-M}+\omega_{\widetilde{f}}\left(\frac{\sqrt{d}}{N}\right).
\end{equation*}
Denoted by
\begin{equation*}
\Phi=(\overline{f}-\underline{f})\widetilde{\Phi}+\underline{f},
\end{equation*}
then we can obtain that
\begin{equation*}
\begin{split}
|\Phi(\mathbf{x})-f(\mathbf{x})|
&\leq|\overline{f}-\underline{f}||\widetilde{\Phi}(\mathbf{x})-\widetilde{f}(\mathbf{x})|\\
&\leq |\overline{f}-\underline{f}|\cdot2^{-M}+|\overline{f}-\underline{f}|\cdot\omega_{\widetilde{f}}\left(\frac{\sqrt{d}}{N}\right)\\
&=\omega_f(\sqrt{d})\cdot2^{-M}+\omega_{f}\left(\frac{\sqrt{d}}{N}\right).
\end{split}
\end{equation*}
Since $0\leq\phi_{3,j}\leq1$ for $1\leq j\leq M$, $0\leq\Phi_3\leq 1$ and hence $0\leq\widetilde{\Phi}\leq 1$.  Then $\underline{f}\leq\Phi\leq \overline{f}$.

Last, we calculate the depth and width  of $\Phi$. Obviously, for $n=1,2,\cdots,N_1$, $\mathcal{L}(\phi_{1,3}^n)=3, \mathcal{W}(\phi_{1,3}^n)=2$. Then $\mathcal{L}(\mathbf{\Phi}_1)=3, \mathcal{W}(\mathbf{\Phi}_1)=2N_1d$, and $\mathcal{L}(\Phi_2\circ\mathbf{\Phi}_1)=4, \mathcal{W}(\Phi_2\circ\mathbf{\Phi}_1)=2N_1d$, and $\mathcal{L}(\phi_{3,j})=3, \mathcal{W}(\phi_{3,j})=2$. Then,  $\mathcal{L}(\Phi)=6, \mathcal{W}(\Phi)=\max\{2N_1d,2M\}=\max\left\{2d\lceil\log_2N\rceil ,2M\right\}$.
\end{proof}


\begin{Remark}
From the proof of Theorem $\ref{almost pointwise general result}$, we know
the activation functions of $\Phi$ are the $\relu$ in the second and fifth layer, the sine in the first and fourth layer and the $2^x$ in the third layer. The same structure also hold for Theorem $\ref{Lp general result}$, Collorary $\ref{almost pointwise Lip result}$ and Colloary $\ref{Lp Lip result}$.
See Figure  $\ref{figure}$ for the detail  on the   structure of  the constructed $\relu$-sine-$2^x$ network  $\Phi$.
\end{Remark}

\begin{figure}[H]
	\centering
	\includegraphics[width=0.6\textwidth]{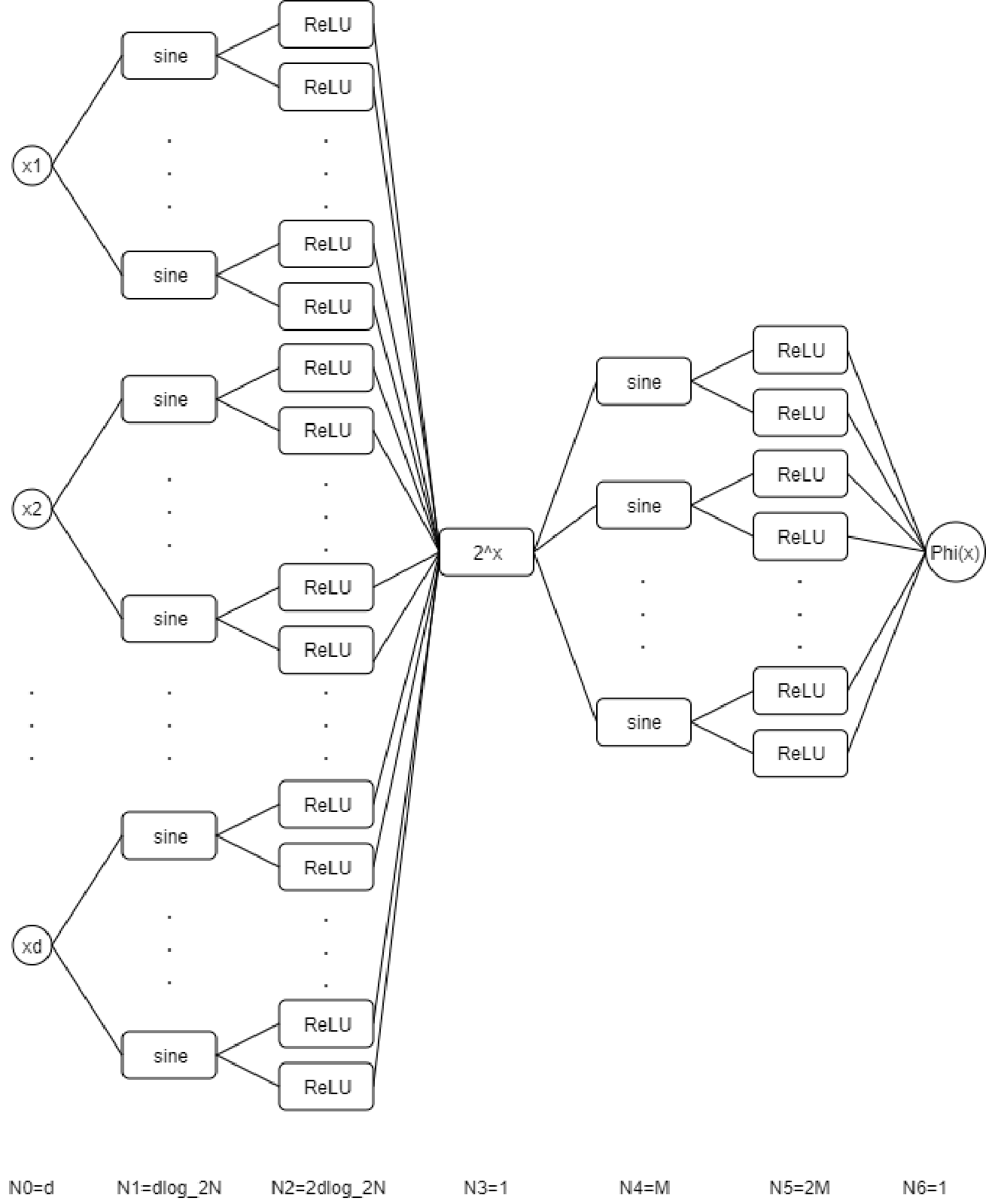}
	\caption{The neural network implementing $\Phi$}
	\label{figure}
\end{figure}

The next Lemma states properties of the mapping
 \begin{equation}
\mathbf{\Phi}_1(\mathbf{x}):=(\phi_{1,3}^1(x_1),\cdots,\phi_{1,3}^{N_1}(x_1),
\phi_{1,3}^1(x_2),\cdots,\phi_{1,3}^{N_1}(x_2),\cdots,\phi_{1,3}^1(x_d),\cdots,\phi_{1,3}^{N_1}(x_d)){\color{blue},}
\end{equation}
which have been used in the construction of network in Theorem \ref{almost pointwise general result}.

\begin{Lemma} \label{Phi1 property}
	(1) The mapping $\mathbf{\Phi_1}$ defined by $(\ref{Phi1})$ satisfies
	\begin{equation*}
	\mathbf{\Phi_1}:[0,1]^d\backslash \Omega(N,\delta,d)\to\{0,1\}^{N_1d}
	\end{equation*}
	and for each $\mathbf{\alpha}\in\{0,1,2,\cdots,N-1\}^d$, $\mathbf{\Phi_1}(\Omega_{\mathbf{\alpha}}(N,\delta,d))$ is a singleton.
	
	(2) For any $\mathbf{\alpha},\mathbf{\beta}\in\{0,1,2,\cdots,N-1\}^d$, $\mathbf{\alpha}\neq\mathbf{\beta}$,
	\begin{equation*}
	\mathbf{\Phi_1}(\Omega_{\mathbf{\alpha}}(N,\delta,d))\bigcap \mathbf{\Phi_1}(\Omega_{\mathbf{\beta}}(N,\delta,d))=\varnothing{\color{blue}.}
	\end{equation*}
\end{Lemma}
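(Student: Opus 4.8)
The plan is to reduce both assertions to a single scalar statement about the ``bit functions'' $\phi_{1,3}^n$: for every $\alpha_i\in\{0,1,\dots,N-1\}$ and every $n\in\{1,\dots,N_1\}$, the function $\phi_{1,3}^n$ is constant on the interval $I_{\alpha_i}:=\left[\frac{\alpha_i}{N},\frac{\alpha_i+1}{N}-\delta\right]$, with value there equal to $1-\xi_n(\alpha_i)$, where $\xi_1(\alpha_i),\dots,\xi_{N_1}(\alpha_i)\in\{0,1\}$ are the binary digits of $\alpha_i$ from most to least significant (this uses $N\le 2^{N_1}$, so $\alpha_i$ fits in $N_1$ bits). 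Granting this, both parts follow at once. Indeed $\Omega_{\mathbf{\alpha}}(N,\delta,d)=\{\mathbf{x}:x_i\in I_{\alpha_i},\ i=1,\dots,d\}$, so $\phi_{1,3}^n(x_i)=1-\xi_n(\alpha_i)\in\{0,1\}$ depends on $\mathbf{\alpha}$ alone; hence $\mathbf{\Phi_1}$ is constant on $\Omega_{\mathbf{\alpha}}(N,\delta,d)$, i.e. $\mathbf{\Phi_1}(\Omega_{\mathbf{\alpha}}(N,\delta,d))$ is a singleton, and since (by the decomposition used in Theorem \ref{almost pointwise general result}) $[0,1]^d\setminus\Omega(N,\delta,d)$ is the union of the $\Omega_{\mathbf{\alpha}}(N,\delta,d)$, this also gives $\mathbf{\Phi_1}\big([0,1]^d\setminus\Omega(N,\delta,d)\big)\subset\{0,1\}^{N_1d}$, which is part (1). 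For part (2), if $\mathbf{\alpha}\neq\mathbf{\beta}$ pick a coordinate $i$ with $\alpha_i\neq\beta_i$; then the blocks $(\phi_{1,3}^1(x_i),\dots,\phi_{1,3}^{N_1}(x_i))=(1-\xi_1(\alpha_i),\dots,1-\xi_{N_1}(\alpha_i))$ and the analogous block for $\beta_i$ differ because $\alpha_i\mapsto(\xi_1(\alpha_i),\dots,\xi_{N_1}(\alpha_i))$ is injective on $\{0,\dots,N-1\}$, so the two singleton images are distinct, hence disjoint.

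To prove the scalar statement I would unwind the definitions. Recall $\phi_{1,3}^n(x)=\phi_{1,2}(2^nx)$, and that $\phi_{1,2}$ is flat and equal to $1$ on each $[2kT,(2k+1)T-2\delta]$ and flat and equal to $0$ on each $[(2k+1)T,(2k+2)T-2\delta]$ with $T:=2^{N_1}/N$; equivalently, away from its length-$2\delta$ transition tails, $\phi_{1,2}(y)$ equals $1$ if $\lfloor Ny/2^{N_1}\rfloor$ is even and $0$ if it is odd. Consequently, away from the corresponding (rescaled) tails, $\phi_{1,3}^n(x)$ is determined by the parity of the ``block index'' $\lfloor Nx/2^{N_1-n}\rfloor$. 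The key claim is then: for $x\in I_{\alpha_i}$ this block index equals $a:=\lfloor \alpha_i/2^{N_1-n}\rfloor$, and $x$ avoids the transition tail of block $a$. Both are endpoint checks. Writing $\alpha_i=a\,2^{N_1-n}+b$ with $0\le b<2^{N_1-n}$, the left endpoint $\alpha_i/N$ of $I_{\alpha_i}$ lies in the flat part of block $a$ since that part begins at $a\,2^{N_1-n}/N\le \alpha_i/N$; for the right endpoint one must verify $\frac{\alpha_i+1}{N}-\delta\le \frac{(a+1)2^{N_1-n}}{N}-\frac{2\delta}{2^n}$, which after multiplying by $N$ rearranges to $\alpha_i+1-(a+1)2^{N_1-n}\le N\delta\big(1-2^{1-n}\big)$. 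Here the left side is the integer $b+1-2^{N_1-n}\le 0$, while the right side is $\ge 0$ because $n\ge 1$, so the inequality holds. Finally $\lfloor\alpha_i/2^{N_1-n}\rfloor\equiv\xi_n(\alpha_i)\pmod 2$, which yields $\phi_{1,3}^n\equiv 1-\xi_n(\alpha_i)$ on $I_{\alpha_i}$, completing the scalar statement.

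The only real obstacle is this last bit of bookkeeping with the two $\delta$-shifts: one must be certain that the transition tails of $\phi_{1,2}$ and the $\delta$-shrunk right ends of the intervals $I_{\alpha_i}$ are arranged so that no $I_{\alpha_i}$ ever straddles a block boundary of any $\phi_{1,3}^n$, $n\le N_1$; the displayed inequality $b+1-2^{N_1-n}\le N\delta(1-2^{1-n})$ is exactly the statement that the two offsets are matched in the right direction, so nothing is lost no matter how the coordinate is dilated by $2^n$. As usual one assumes $\delta$ small enough (e.g. $\delta\le 1/N$) so that the $I_{\alpha_i}$, and hence the $\Omega_{\mathbf{\alpha}}(N,\delta,d)$, are nonempty and the decomposition of $[0,1]^d\setminus\Omega(N,\delta,d)$ quoted above is valid; everything else is routine.
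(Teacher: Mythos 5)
Your proof is correct, and its core computation coincides with the paper's: the endpoint check that the dilated interval $2^n I_{\alpha_i}$ sits inside a single flat half-period of $\phi_{1,2}$, carried out by writing $\alpha_i=a\,2^{N_1-n}+b$ with $0\le b<2^{N_1-n}$, is exactly the containment verified in the paper's proof of part (1) (your $a,b$ are its $A_1,A_2$), and your inequality $b+1-2^{N_1-n}\le N\delta\left(1-2^{1-n}\right)$ is the paper's second inequality with the $\delta$-terms retained rather than dropped via $2^n\delta\ge 2\delta$. Where you genuinely diverge is part (2): the paper argues by contradiction and an induction on $n$ that equal outputs force $\alpha_i=\beta_i$, whereas you identify the constant value of $\phi_{1,3}^n$ on $I_{\alpha_i}$ explicitly as $1-\xi_n(\alpha_i)$, the complement of the $n$-th binary digit (via the parity of the block index $\lfloor \alpha_i/2^{N_1-n}\rfloor$), and then injectivity of the binary representation on $\{0,\dots,N-1\}$ gives distinctness immediately; this is cleaner and has the added benefit of deriving both parts of the lemma from one scalar claim. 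One caveat you share with the paper: the decomposition $[0,1]^d\backslash\Omega(N,\delta,d)=\bigcup_{\mathbf{\alpha}}\Omega_{\mathbf{\alpha}}(N,\delta,d)$ misses points having a coordinate exactly equal to $1$, so part (1) on the full set strictly requires the (easy, for $\delta<1/N$) extra observation that $\phi_{1,3}^n(1)\in\{0,1\}$; since the paper's own proof has the same omission, this is a shared boundary quibble rather than a gap in your argument.
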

\begin{proof}
	(1) Let $\mathbf{\alpha}\in\{0,1,2,\cdots,N-1\}^d$.
	 By the definition of $\Omega_{\mathbf{\alpha}}(N,\delta,d)$ and $\mathbf{\Phi_1}$, it suffices to show that for any $i=1,2,\cdots,d$ and $n=1,2,\cdots,N_1\left(N_1=\lceil\log_2N\rceil\right)$,
	\begin{equation*}
	\left[\frac{2^n\alpha_i}{N},\frac{2^n(\alpha_i+1)}{N}-2^n\delta\right]
	\subset\left[k\cdot\frac{2^{N_1}}{N},(k+1)\cdot\frac{2^{N_1}}{N}-2\delta\right],
	\end{equation*}
	for some $k\in\mathbb{N}_0^+$. It is equivalent  to two inequalities:
	\begin{equation*}
	\left\{\begin{array}{ll}
	k\cdot\frac{2^{N_1}}{N}&\leq\frac{2^n\alpha_i}{N},\\
	(k+1)\cdot\frac{2^{N_1}}{N}-2\delta&\geq\frac{2^n(\alpha_i+1)}{N}-2^n\delta.
	\end{array}\right.
	\end{equation*}
	In the following we show that there exists a $k\in\mathbb{N}_0^+$ satisfying
	\begin{equation*}
	\frac{\alpha_i-2^{N_1-n}+1}{2^{N_1-n}}\leq k\leq\frac{\alpha_i}{2^{N_1-n}}.
	\end{equation*}
	Since $\alpha_i\in\{0,1,\cdots,N-1\}$, there exists $A_1\in\mathbb{N}_0^+$ and $A_2\in\{0,1,\cdots,2^{N_1-n}-1\}$ such that
	\begin{equation*}
	\alpha_i=A_1\cdot2^{N_1-n}+A_2.
	\end{equation*}
	Then
	\begin{equation*}
	\begin{aligned}
	A_1-\frac{\alpha_i}{2^{N_1-n}}&=\frac{-A_2}{2^{N_1-n}}&\leq0,\\
	A_1-\frac{\alpha_i-2^{N_1-n}+1}{2^{N_1-n}}&=\frac{2^{N_1-n}-1-A_2}{2^{N_1-n}}&\geq 0.
	\end{aligned}
	\end{equation*}
	Therefore, we can set $k=A_1$ and conclude the result.
	
	(2) By (1) it is sufficient to show that $\mathbf{\Phi_1}\left(\frac{\mathbf{\alpha}}{N}\right)\neq\mathbf{\Phi_1}\left(\frac{\mathbf{\beta}}{N}\right)$ for any $\mathbf{\alpha}\neq\mathbf{\beta}$.
	The fact that $\mathbf{\alpha}\neq\mathbf{\beta}$ implies there exists an index $i$, with $1\leq i\leq d$ such that $\alpha_i\neq\beta_i$. We will show there exists an index $j$, with $1\leq j\leq N_1$ such that $f_{3,j}\left(\frac{\alpha_i}{N}\right)\neq f_{3,j}\left(\frac{\beta_i}{N}\right)$. It can be verified by  contradiction. Assume that for all $1\leq n\leq N_1$ there holds $f_{3,j}\left(\frac{\alpha_i}{N}\right)= f_{3,j}\left(\frac{\beta_i}{N}\right)$, which means $f_2\left(\frac{2^n\alpha_i}{N}\right)=f_2\left(\frac{2^n\beta_i}{N}\right)$ for all $1\leq n\leq N_1$ by definition. For $n=1$, $f_2\left(\frac{2\gamma}{N}\right)=1$ when  $\gamma\in\{0,1,\cdots,2^{N_1-1}-1\}$ and $f_2\left(\frac{2\gamma}{N}\right)=0$ when $\gamma\in\{2^{N_1-1},2^{N_1-1}+1,\cdots,N-1\}$, respectively. Then we have $\alpha_i,\beta_i\in\{0,1,\cdots,2^{N_1-1}-1\}$. Similarly for $n=2$,
we can deduce that
$f_2\left(\frac{2^2\gamma}{N}\right)=1$ when  $\gamma\in\{0,1,\cdots,2^{N_1-2}-1\}$ and $f_2\left(\frac{2^2\gamma}{N}\right)=0$ when $\gamma\in\{2^{N_1-2},2^{N_1-2}+1,\cdots,2^{N_1-1}-1\}$, respectively. Thus we obtain that $\alpha_i,\beta_i\in\{0,1,\cdots,2^{N_1-2}-1\}$. The same argument can be applied to $n=N_1$, one may find $\alpha_i,\beta_i\in\{0\}$. This  contradicts to the fact that $\alpha_i\neq\beta_i$.
\end{proof}

\begin{Theorem} \label{Lp general result}
\sloppy Let $M,N\in \mathbb{N}^{+}$, $\delta>0$, $p\in[1,+\infty)$. For any $f\in C\left([0,1]^d\right)$, there exists a $\relu$-sine-$2^x$  network $\Phi$ with $\mathcal{L}(\Phi)=6, \mathcal{W}(\Phi)=\max\left\{2d\lceil\log_2N\rceil ,2M\right\}$ such that
\begin{equation*}
\|f-\Phi\|_{L^p}\leq\omega_f(\sqrt{d})\cdot2^{-M}+\omega_{f}\left(\frac{\sqrt{d}}{N}\right)+\omega_f(\sqrt{d})\left[1-(1-N\delta)^d\right]^{1/p}.
\end{equation*}
\end{Theorem}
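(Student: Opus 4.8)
The plan is to reduce Theorem \ref{Lp general result} to the pointwise estimate already established in Theorem \ref{almost pointwise general result}. I would take exactly the network $\Phi$ produced by Theorem \ref{almost pointwise general result} (with the same $M$, $N$, $\delta$), so that $\mathcal{L}(\Phi)=6$ and $\mathcal{W}(\Phi)=\max\{2d\lceil\log_2 N\rceil,2M\}$ are inherited for free, and only the $L^p$ bound needs work. The point is that Theorem \ref{almost pointwise general result} controls $|f(\mathbf{x})-\Phi(\mathbf{x})|$ on the ``good'' set $[0,1]^d\setminus\Omega(N,\delta,d)$, while on the ``bad'' set $\Omega(N,\delta,d)$ we only know the crude bound $|f(\mathbf{x})-\Phi(\mathbf{x})|\le|\overline f-\underline f|=\omega_f(\sqrt d)$, using that both $f$ and $\Phi$ take values in $[\underline f,\overline f]$. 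So the $L^p$ norm splits as an integral over the good set plus an integral over the bad set, and the bad-set contribution is $\omega_f(\sqrt d)$ times $|\Omega(N,\delta,d)|^{1/p}$.

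The key steps, in order, are: (i) invoke Theorem \ref{almost pointwise general result} to get $\Phi$ and the pointwise bound on $[0,1]^d\setminus\Omega(N,\delta,d)$; (ii) write
\begin{equation*}
\|f-\Phi\|_{L^p}^p=\int_{[0,1]^d\setminus\Omega(N,\delta,d)}|f-\Phi|^p\,d\mathbf{x}+\int_{\Omega(N,\delta,d)}|f-\Phi|^p\,d\mathbf{x};
\end{equation*}
(iii) bound the first integral by $\bigl(\omega_f(\sqrt d)2^{-M}+\omega_f(\sqrt d/N)\bigr)^p$ times $|[0,1]^d\setminus\Omega|\le 1$; (iv) bound the second integral by $\omega_f(\sqrt d)^p\,|\Omega(N,\delta,d)|$; (v) compute the Lebesgue measure of $\Omega(N,\delta,d)$ via its complement, namely $|[0,1]^d\setminus\Omega(N,\delta,d)|$, which is the product over the $d$ coordinates of the length of $[0,1]$ with $N$ subintervals of length $\delta$ removed, giving $(1-N\delta)^d$, hence $|\Omega(N,\delta,d)|=1-(1-N\delta)^d$; (vi) take $p$-th roots and apply subadditivity of $t\mapsto t^{1/p}$ (i.e. $(a+b)^{1/p}\le a^{1/p}+b^{1/p}$ for $a,b\ge0$, $p\ge1$) to combine the two pieces into the stated bound $\omega_f(\sqrt d)2^{-M}+\omega_f(\sqrt d/N)+\omega_f(\sqrt d)[1-(1-N\delta)^d]^{1/p}$.

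The only genuinely nontrivial ingredient is step (v), the measure computation for $\Omega(N,\delta,d)$: one must check that the complement within $[0,1]^d$ is the product set $\prod_{i=1}^d\bigl([0,1]\setminus\bigcup_{j=1}^N(\tfrac{j}{N}-\delta,\tfrac{j}{N})\bigr)$, so that its measure factorizes as $(1-N\delta)^d$ (this implicitly needs $N\delta\le 1$, consistent with $0\le\delta\le\tfrac12$ and the intended regime where the term is small). Everything else is a routine split-the-domain argument plus the elementary inequality $(a+b)^{1/p}\le a^{1/p}+b^{1/p}$. I would also remark that the extra term vanishes as $\delta\to0^+$, so letting $\delta$ be arbitrarily small recovers the clean rate advertised in the introduction; this is the natural bridge to Corollary \ref{Lp Lip result}, where one chooses $M$, $N$ in terms of $\epsilon$ and then sends $\delta$ to $0$.
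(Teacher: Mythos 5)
Your proposal is correct and matches the paper's own proof essentially step for step: take the network of Theorem \ref{almost pointwise general result}, split the $L^p$ integral over $[0,1]^d\setminus\Omega(N,\delta,d)$ and $\Omega(N,\delta,d)$, use the pointwise bound on the good set and the crude bound $|f-\Phi|\le\overline f-\underline f\le\omega_f(\sqrt d)$ on the bad set of measure $1-(1-N\delta)^d$, then combine (your subadditivity of $t\mapsto t^{1/p}$ is just the paper's final inequality written after taking $p$-th roots). No gaps; your remarks on the measure factorization and the implicit requirement $\delta\le 1/N$ are accurate and, if anything, slightly more careful than the paper.
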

\begin{proof}
By the approximation results in Theorem $\ref{almost pointwise general result}$, we can compute the error in $L^p$ norm as follows:
\begin{equation*}
\begin{split}
&\int_{[0,1]^d}|f-\Phi|^pd\mathbf{x}=\int_{[0,1]^d\backslash \Omega(N,\delta,d)}|f-\Phi|^pd\mathbf{x}
+\int_{\Omega(N,\delta,d)}|f-\Phi|^pd\mathbf{x}\\
&\leq \left[\omega_f(\sqrt{d})\cdot2^{-M}+\omega_{f}\left(\frac{\sqrt{d}}{N}\right)\right]^p
\int_{[0,1]^d\backslash \Omega(N,\delta,d)}d\mathbf{x}
+\omega_f^p(\sqrt{d})\int_{\Omega(N,\delta,d)}d\mathbf{x}\\
&=\left[\omega_f(\sqrt{d})\cdot2^{-M}+\omega_{f}\left(\frac{\sqrt{d}}{N}\right)\right]^p
(1-N\delta)^d
+\omega_f^p(\sqrt{d})\left[1-(1-N\delta)^d\right]\\
&\leq\left\{\omega_f(\sqrt{d})\cdot2^{-M}+\omega_{f}\left(\frac{\sqrt{d}}{N}\right)+\omega_f(\sqrt{d})\left[1-(1-N\delta)^d\right]^{1/p}\right\}^p.
\end{split}
\end{equation*}
\end{proof}

In \cite{lu2020deep}, an approach of expanding the approximation result from $\mathbf{x}\in[0,1]^d\backslash\Omega(N,\delta,d)$ to the whole region $[0,1]^d$ is developed, which is based on a technique called  horizontal shift. The result obtained in \cite{lu2020deep} is stated as follows.
\begin{Proposition}[Theorem 2.1, \cite{lu2020deep}] \label{expand to the whole region}
Given any $\epsilon>0$, $N, L, K\in\mathbb{N}^+$, and $\delta\in(0, \frac{1}{3K}]$, assume $f\in C\left([0, 1]^d\right)$
and $\widetilde{\phi}$ is a  network with width $ \mathcal{W}(\widetilde{\phi}) = N$ and depth $\mathcal{L}(\widetilde{\phi}) = L$. If
\begin{equation*}
|f(\mathbf{x})-\widetilde{\phi}(\mathbf{x})| \leq \varepsilon, \quad \mathbf{x} \in[0,1]^{d} \backslash \Omega(K, \delta, d),
\end{equation*}
then there exists a  new network  $\phi$ with width $\mathcal{W}(\phi) = 3^d(N + 4)$
and depth $\mathcal{L}(\phi) = L + 2d$ such that
\begin{equation*}
|f(\mathbf{x})-\phi(\mathbf{x})| \leq \varepsilon+d \cdot \omega_{f}(\delta), \quad\mathbf{x} \in[0,1]^{d}.
\end{equation*}
Moreover, the activation functions of $\phi$ are the activation functions of $\widetilde{\phi}$ and  $\relu$.
\end{Proposition}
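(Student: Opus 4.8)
The plan is to prove Proposition~\ref{expand to the whole region} by an induction that eliminates the coordinates of the ``trifling'' set $\Omega(K,\delta,d)$ one at a time, each elimination costing $\omega_f(\delta)$ in accuracy, a factor $3$ in width and $2$ in depth --- this is the \emph{horizontal shift}. Write $B:=\bigcup_{j=1}^{K}\bigl(\tfrac{j}{K}-\delta,\tfrac{j}{K}\bigr)\subseteq[0,1]$ for the one-dimensional bad set, and for $0\le k\le d$ let $G_k$ be the set of $\mathbf x\in[0,1]^d$ with $x_{k+1},\dots,x_d\notin B$; thus $G_0=[0,1]^d\setminus\Omega(K,\delta,d)$ and $G_d=[0,1]^d$. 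I would construct networks $\phi_0,\phi_1,\dots,\phi_d$, with $\phi_0:=\widetilde{\phi}$, satisfying the invariant
\begin{equation*}
|f(\mathbf x)-\phi_k(\mathbf x)|\le\varepsilon+k\,\omega_f(\delta)\qquad\text{for all }\mathbf x\in G_k .
\end{equation*}
For $k=0$ this is the hypothesis, and for $k=d$ it is exactly the conclusion with $\phi:=\phi_d$; all $\phi_k$ use only the activations of $\widetilde\phi$ together with $\relu$.

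For the inductive step I would obtain $\phi_{k+1}$ from three horizontally shifted copies of $\phi_k$ along the $(k+1)$-st coordinate. Let $L_{k+1},R_{k+1}\colon\mathbb R^d\to\mathbb R^d$ act as the identity on all coordinates except the $(k+1)$-st, which they replace by $\max\{0,\min\{x_{k+1}-\delta,\,1-\delta\}\}$ and $\max\{0,\min\{x_{k+1}+\delta,\,1-\delta\}\}$ respectively (an affine shift followed by a $\relu$-clip into $[0,1-\delta]$), and set
\begin{equation*}
\phi_{k+1}(\mathbf x):=\operatorname{mid}\!\bigl(\phi_k(L_{k+1}\mathbf x),\,\phi_k(\mathbf x),\,\phi_k(R_{k+1}\mathbf x)\bigr),
\end{equation*}
where $\operatorname{mid}(a,b,c)=a+b+c-\max\{a,b,c\}-\min\{a,b,c\}$ is the median of three reals; since $\max$, $\min$, hence $\operatorname{mid}$, are exactly representable by $\relu$ (via $|t|=\relu(t)+\relu(-t)$), both the clips and $\operatorname{mid}$ are built from a constant number of $\relu$ neurons and a bounded number of layers. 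The combinatorial heart is the elementary claim that $\delta\le\tfrac1{3K}$ forces, for every $t\in[0,1]$, at least two of the three clipped shifts $\max\{0,\min\{t-\delta,1-\delta\}\}$, $t$, $\max\{0,\min\{t+\delta,1-\delta\}\}$ to lie in a ``good'' grid subinterval $[\tfrac jK,\tfrac{j+1}K-\delta]$ (hence outside $B$) while still differing from $t$ by at most $\delta$. I would verify this by the three cases $t>1-\delta$, $t\in B$, and $t\in[0,1-\delta]\setminus B$, using $2\delta\le\tfrac1K$ in the first two and $3\delta\le\tfrac1K$ in the last.

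Granting the claim, fix $\mathbf x\in G_{k+1}$. The points $L_{k+1}\mathbf x,\mathbf x,R_{k+1}\mathbf x$ all keep coordinates $k+2,\dots,d$ equal to those of $\mathbf x$, hence outside $B$; by the claim applied to $t=x_{k+1}$, at least two of these three points additionally have their $(k+1)$-st coordinate in a good subinterval, so they lie in $G_k$ and within Euclidean distance $\delta$ of $\mathbf x$. For each such point $\mathbf y$ the induction hypothesis and the modulus bound give $|\phi_k(\mathbf y)-f(\mathbf x)|\le|\phi_k(\mathbf y)-f(\mathbf y)|+|f(\mathbf y)-f(\mathbf x)|\le\varepsilon+(k+1)\,\omega_f(\delta)$. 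Since the median of three reals always lies in the closed interval spanned by any two of them, $\phi_{k+1}(\mathbf x)$ lies between two values each within $\varepsilon+(k+1)\,\omega_f(\delta)$ of $f(\mathbf x)$, so $|\phi_{k+1}(\mathbf x)-f(\mathbf x)|\le\varepsilon+(k+1)\,\omega_f(\delta)$; this is the invariant at level $k+1$.

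Finally I would do the size bookkeeping: each step runs three parallel copies of $\phi_k$ (tripling the width) and adds only $O(1)$ neurons for the two clips and the $\operatorname{mid}$ gadget together with a bounded number of layers, the shifts' affine parts and $\operatorname{mid}$'s final linear combination being absorbed into adjacent affine maps; unwinding the recursion with the constants arranged as in the statement yields $\mathcal W(\phi_d)=3^d(N+4)$ and $\mathcal L(\phi_d)=L+2d$, and $\phi_d$ introduces no activation beyond those of $\widetilde\phi$ and $\relu$. I expect the main obstacle to be the design of this inductive step rather than any estimate: a $\relu$ network cannot take the ``correct'' weighted blend of the shifted copies (it cannot multiply), so the crucial idea is that the median of the three copies is simultaneously exactly $\relu$-computable and robust to the single copy that may be inaccurate, after which one only needs the $\delta\le\tfrac1{3K}$ counting claim guaranteeing a strict majority of accurate copies at every step; the one point that needs real care is the right endpoint $t=1$, where the $+\delta$-shift must be clipped back to $1-\delta$.
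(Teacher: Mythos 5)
Your construction is exactly the horizontal-shift argument that this paper relies on: the Proposition is quoted from \cite{lu2020deep} without an independent proof here, and the proof there proceeds precisely as you do, replacing the $(k+1)$-st coordinate by $\pm\delta$-shifted (clipped) copies and taking the $\relu$-computable median of the three, with the $\delta\le\frac{1}{3K}$ condition guaranteeing that at least two copies are evaluated in the good region, yielding the $3^d(N+4)$ width and $L+2d$ depth bounds. So your proposal is correct and essentially the same approach; the only loose end is the routine bookkeeping of the exact constants in the width/depth count, which works out as you indicate.
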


\begin{Remark}
Note that Theorem 2.1 in \cite{lu2020deep} is applied for $\relu$ networks. However, its argument can be extended to network with any activation functions easily.
\end{Remark}

\begin{Theorem} \label{pointwise general result}
\sloppy Let $M,N\in \mathbb{N}^{+}$, $\delta>0$. For any $f\in C\left([0,1]^d\right)$, there exists a $\relu$-sine-$2^x$ network $\Phi$ with $\mathcal{L}(\Phi)= 2d+6, \mathcal{W}(\Phi)=3^d\left(\max\left\{2d\lceil\log_2N\rceil ,2M\right\}+4\right)$ such that
\begin{equation*}
|f(\mathbf{x})-\Phi(\mathbf{x})|\leq \omega_{f}(\sqrt{d})\cdot2^{-M}+\omega_{f}\left(\frac{\sqrt{d}}{N}\right)+d\cdot\omega_f(\delta),\quad \mathbf{x}\in[0,1]^d.
\end{equation*}
\end{Theorem}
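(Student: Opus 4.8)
The plan is to combine Theorem~\ref{almost pointwise general result} with the horizontal-shift device recorded in Proposition~\ref{expand to the whole region}, which is precisely designed to upgrade an estimate valid only on $[0,1]^d\setminus\Omega(K,\delta,d)$ to one valid on the whole cube, at the cost of enlarging the depth by $2d$, multiplying the width by $3^d$ (up to the additive $4$), and adding an error term $d\cdot\omega_f(\delta)$.

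First I would dispose of the parameter range. Proposition~\ref{expand to the whole region} requires $\delta\in(0,\tfrac{1}{3N}]$, whereas the statement only asks $\delta>0$; since $\omega_f$ is nondecreasing it suffices to treat $\delta\le\tfrac{1}{3N}$, because for a larger given $\delta$ one simply runs the argument with $\delta':=\min\{\delta,\tfrac{1}{3N}\}$, obtaining the same depth and width and the smaller, hence still valid, error $d\cdot\omega_f(\delta')\le d\cdot\omega_f(\delta)$. Note $\delta'\le\tfrac12$, so the hypotheses of Theorem~\ref{almost pointwise general result} are satisfied as well.

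Then, writing $N_0:=\max\{2d\lceil\log_2N\rceil,2M\}$, apply Theorem~\ref{almost pointwise general result} to obtain a $\relu-\mathrm{sine}-2^x$ network $\widetilde{\Phi}$ with $\mathcal{L}(\widetilde{\Phi})=6$, $\mathcal{W}(\widetilde{\Phi})=N_0$ and
\begin{equation*}
|f(\mathbf{x})-\widetilde{\Phi}(\mathbf{x})|\le \omega_f(\sqrt{d})\cdot 2^{-M}+\omega_f\!\left(\tfrac{\sqrt{d}}{N}\right)=:\varepsilon,\qquad \mathbf{x}\in[0,1]^d\setminus\Omega(N,\delta,d).
\end{equation*}
Feeding this into Proposition~\ref{expand to the whole region} with $K=N$, $L=6$, width parameter $N_0$, accuracy $\varepsilon$, and the admissible $\delta$ yields a network $\Phi$ with $\mathcal{L}(\Phi)=2d+6$, $\mathcal{W}(\Phi)=3^d(N_0+4)=3^d(\max\{2d\lceil\log_2N\rceil,2M\}+4)$ and $|f(\mathbf{x})-\Phi(\mathbf{x})|\le\varepsilon+d\cdot\omega_f(\delta)$ for all $\mathbf{x}\in[0,1]^d$, which is exactly the claimed inequality and the claimed architecture. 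By the Remark following Proposition~\ref{expand to the whole region}, the shift construction introduces only extra $\relu$ units, so $\Phi$ is still a $\relu-\mathrm{sine}-2^x$ network.

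I do not expect a genuine obstacle: the theorem is a bookkeeping corollary of the two preceding results. The only care needed is in reconciling the parameter constraints (the $\delta\le\tfrac{1}{3N}$ restriction of Proposition~\ref{expand to the whole region} versus the free $\delta>0$ in the statement, handled by monotonicity of $\omega_f$) and in verifying that no non-$\relu$ activations are added by the horizontal shift, so that the $\relu$-sine-$2^x$ structure — and hence the (generalized) differentiability invoked later for SGD training — is preserved.
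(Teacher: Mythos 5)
Your proposal is correct and follows exactly the paper's route: the paper proves this theorem in one line by combining Theorem \ref{almost pointwise general result} with Proposition \ref{expand to the whole region}, which is precisely your argument. Your additional care in reconciling the constraint $\delta\in(0,\tfrac{1}{3N}]$ of the proposition with the unrestricted $\delta>0$ in the statement (via monotonicity of $\omega_f$) is a detail the paper leaves implicit, and it is handled correctly.
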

\begin{proof}
This Theorem follows directly from Theorem $\ref{almost pointwise general result}$ and Proposition $\ref{expand to the whole region}$.
\end{proof}

\begin{Remark}
In Theorem $\ref{pointwise general result}$ (also  Corollary $\ref{pointwise Lip result}$ below), activation functions of  $\Phi$ in  the first and  fourth  layers are the sine and the third layer the $2^x$, while in other layers are all the $\relu$.
\end{Remark}

For  $f\in \mathcal{H}_{\mu}^{\alpha}([0,1]^{d})$, the continuity modulus $\omega_f(r)$ can be bounded by  H\"{o}lder  constant $\mu$, i.e.,  $\omega_f(r)\leq\mu r^{\alpha}.$  Hence we are able to obtain a series of more explicit approximation results that breaking the curse of dimensionality {\color{blue} in approximation}, i.e., to achieve an approximation error of $\epsilon$, the depth and the width depend on $\epsilon$ only polynomially rather than exponentially.

\begin{Corollary} \label{almost pointwise Lip result}
	\sloppy Let $\delta>0$. For any $f\in\mathcal{H}_{\mu}^{\alpha}([0,1]^d)$ and $\epsilon>0$, there exists a $\relu$-sine-$2^x$ network $\Phi$ with $\mathcal{L}(\Phi)=6, \mathcal{W}(\Phi)=\max\left\{2d\left\lceil\log_2(\sqrt{d}\left(\frac{2\mu}{\epsilon}\right)^{1/{\alpha}})\right\rceil ,2\left\lceil\log_2\frac{\mu d^{\alpha/2}}{\epsilon}\right\rceil+2\right\}$ such that
	\begin{equation*}
	|f(\mathbf{x})-\Phi(\mathbf{x})|\leq \epsilon,\quad\mathbf{x}\in[0,1]^d\backslash \Omega\left(\left\lceil\frac{2\mu\sqrt{d}}{\epsilon}\right\rceil,\delta,d\right).
	\end{equation*}
\end{Corollary}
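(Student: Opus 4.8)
The plan is to derive Corollary \ref{almost pointwise Lip result} directly from Theorem \ref{almost pointwise general result} by making concrete choices of the hyperparameters $M$ and $N$ tailored to the H\"{o}lder class. Since $f\in\mathcal{H}_{\mu}^{\alpha}([0,1]^d)$, we have the bound $\omega_f(r)\leq\mu r^{\alpha}$, so the two error terms in Theorem \ref{almost pointwise general result} become $\omega_f(\sqrt{d})\cdot2^{-M}\leq \mu d^{\alpha/2}2^{-M}$ and $\omega_f(\sqrt{d}/N)\leq\mu (\sqrt{d}/N)^{\alpha}$. The idea is to force each of these two terms to be at most $\epsilon/2$ by choosing $M$ and $N$ large enough, so that the total error is at most $\epsilon$.

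The key steps, in order, are as follows. First, I would require $\mu d^{\alpha/2}2^{-M}\leq\epsilon/2$, which is equivalent to $2^M\geq 2\mu d^{\alpha/2}/\epsilon$, i.e.\ $M\geq\log_2(2\mu d^{\alpha/2}/\epsilon)$; so take $M=\left\lceil\log_2\frac{\mu d^{\alpha/2}}{\epsilon}\right\rceil+1$ (one checks this integer satisfies the inequality, since $\lceil\log_2 t\rceil+1\geq\log_2 t+1=\log_2(2t)$). Second, I would require $\mu(\sqrt{d}/N)^{\alpha}\leq\epsilon/2$, equivalently $N\geq\sqrt{d}(2\mu/\epsilon)^{1/\alpha}$; so take $N=\left\lceil\sqrt{d}\left(\frac{2\mu}{\epsilon}\right)^{1/\alpha}\right\rceil$, which is the same as the quantity appearing in $\Omega\left(\left\lceil\frac{2\mu\sqrt{d}}{\epsilon}\right\rceil,\delta,d\right)$ after noting $\sqrt{d}(2\mu/\epsilon)^{1/\alpha}$ and $2\mu\sqrt{d}/\epsilon$ agree when $\alpha=1$ and the statement uses the former under the ceiling in the width and the latter in the excluded region — I would reconcile these by writing both as $N:=\left\lceil\sqrt{d}\left(\frac{2\mu}{\epsilon}\right)^{1/\alpha}\right\rceil$ consistently. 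Third, plug these $M,N$ into Theorem \ref{almost pointwise general result}: the error bound becomes $\leq\epsilon/2+\epsilon/2=\epsilon$ on $[0,1]^d\setminus\Omega(N,\delta,d)$, and the depth is $\mathcal{L}(\Phi)=6$ while the width is $\mathcal{W}(\Phi)=\max\{2d\lceil\log_2 N\rceil,2M\}$. Finally, substituting the chosen values of $M$ and $N$ into $\max\{2d\lceil\log_2 N\rceil,2M\}$ gives exactly $\max\left\{2d\left\lceil\log_2(\sqrt{d}\left(\frac{2\mu}{\epsilon}\right)^{1/\alpha})\right\rceil,2\left\lceil\log_2\frac{\mu d^{\alpha/2}}{\epsilon}\right\rceil+2\right\}$, using $2M=2\lceil\log_2(\mu d^{\alpha/2}/\epsilon)\rceil+2$ and $\lceil\log_2\lceil t\rceil\rceil=\lceil\log_2 t\rceil$.

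The only real subtlety — not so much an obstacle as a bookkeeping point to get right — is the interplay of ceilings: one must verify that the integer choice $M=\lceil\log_2(\mu d^{\alpha/2}/\epsilon)\rceil+1$ genuinely satisfies $2^{-M}\leq\epsilon/(2\mu d^{\alpha/2})$, and likewise that $N=\lceil\sqrt{d}(2\mu/\epsilon)^{1/\alpha}\rceil\geq\sqrt{d}(2\mu/\epsilon)^{1/\alpha}$ suffices for the second term, and that replacing $N$ by its ceiling inside $\log_2$ does not change $\lceil\log_2 N\rceil$. All of these are elementary monotonicity facts about $\lceil\cdot\rceil$ and $\log_2$. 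Once these are checked, the corollary is an immediate specialization of Theorem \ref{almost pointwise general result} with no new network construction needed.
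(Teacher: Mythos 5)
Your proposal is correct and follows essentially the same route as the paper: apply Theorem \ref{almost pointwise general result}, bound $\omega_f(r)\leq\mu r^{\alpha}$, and choose $M=\left\lceil\log_2\frac{\mu d^{\alpha/2}}{\epsilon}\right\rceil+1$ and $N=\left\lceil\sqrt{d}\left(\frac{2\mu}{\epsilon}\right)^{1/\alpha}\right\rceil$ so that each error term is at most $\epsilon/2$. Your reconciliation of the excluded region is also the right reading: the paper's proof works with $N=\left\lceil\sqrt{d}\left(\frac{2\mu}{\epsilon}\right)^{1/\alpha}\right\rceil$, so the $\left\lceil\frac{2\mu\sqrt{d}}{\epsilon}\right\rceil$ appearing in the stated region is a slip that coincides with $N$ only when $\alpha=1$.
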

\begin{proof}
	From Theorem $\ref{almost pointwise general result}$, there exists a $\relu-\sin-2^x$ network $\Phi$ with $L(\Phi)=6,W(\Phi)=\max\left\{2d\lceil\log_2N\rceil ,2M\right\}$ such that for $\mathbf{x}\in[0,1]^d\backslash\Omega(N,\delta,d)$,
	\begin{equation*}
	|f(\mathbf{x})-\Phi(\mathbf{x})|\leq \omega_{f}(\sqrt{d})\cdot 2^{-M}+\omega_{f}\left(\frac{\sqrt{d}}{N}\right)\leq\mu d^{\alpha/2}\cdot 2^{-M}+\mu\left(\frac{\sqrt{d}}{N}\right)^{\alpha}.
	\end{equation*}
	Set $\mu d^{\alpha/2}\cdot 2^{-M}=\mu\left(\frac{\sqrt{d}}{N}\right)^{\alpha}=\frac{\epsilon}{2}$. Then $M=\left\lceil\log_2\frac{\mu d^{\alpha/2}}{\epsilon}\right\rceil+1$, $N=\left\lceil\sqrt{d}\left(\frac{2\mu}{\epsilon}\right)^{1/{\alpha}}\right\rceil$.
\end{proof}

\begin{Corollary} $\label{Lp Lip result}$
\sloppy Let $p\in[1,+\infty)$. For any $f\in\mathcal{H}_{\mu}^{\alpha}([0,1]^d)$ and $\epsilon>0$, there exists a $\relu$-sine-$2^x$ network $\Phi$ with $\mathcal{L}(\Phi)=6, \mathcal{W}(\Phi)=\max\left\{2d\left\lceil\log_2(\sqrt{d}\left(\frac{3\mu}{\epsilon}\right)^{1/{\alpha}})\right\rceil ,2\left\lceil\log_2\frac{3\mu d^{\alpha/2}}{2\epsilon}\right\rceil+2\right\}$ such that
\begin{equation*}
\|f-\Phi\|_{L^p}\leq\epsilon.
\end{equation*}
\end{Corollary}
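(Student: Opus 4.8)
The plan is to derive this corollary directly from Theorem~\ref{Lp general result} by choosing the hyperparameters $M,N$ (and the auxiliary parameter $\delta$) appropriately, using the H\"older estimate $\omega_f(r)\le\mu r^\alpha$ to replace the continuity modulus, and splitting the resulting error bound into three equal parts.

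First I would apply Theorem~\ref{Lp general result}: for any $M,N\in\mathbb{N}^+$ and $\delta>0$ there is a $\relu$-$\mathrm{sine}$-$2^x$ network $\Phi$ of depth $6$ and width $\max\{2d\lceil\log_2N\rceil,2M\}$ with
\[
\|f-\Phi\|_{L^p}\le\omega_f(\sqrt d)\cdot 2^{-M}+\omega_f\!\left(\tfrac{\sqrt d}{N}\right)+\omega_f(\sqrt d)\left[1-(1-N\delta)^d\right]^{1/p}.
\]
Since $f\in\mathcal{H}_\mu^\alpha([0,1]^d)$ gives $\omega_f(r)\le\mu r^\alpha$, the right-hand side is at most
\[
\mu d^{\alpha/2}\cdot 2^{-M}+\mu\left(\tfrac{\sqrt d}{N}\right)^{\alpha}+\mu d^{\alpha/2}\left[1-(1-N\delta)^d\right]^{1/p}.
\]

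Next I would force each of these three terms to be at most $\epsilon/3$. Imposing $\mu d^{\alpha/2}2^{-M}\le\epsilon/3$ is equivalent to $M\ge\log_2(3\mu d^{\alpha/2}/\epsilon)$, so I take $M=\lceil\log_2(3\mu d^{\alpha/2}/\epsilon)\rceil$; imposing $\mu(\sqrt d/N)^\alpha\le\epsilon/3$ is equivalent to $N\ge\sqrt d(3\mu/\epsilon)^{1/\alpha}$, so I take $N=\lceil\sqrt d(3\mu/\epsilon)^{1/\alpha}\rceil$. With $N$ now fixed, $[1-(1-N\delta)^d]^{1/p}\to 0$ as $\delta\to 0^+$, so I choose $\delta\in(0,1/N)$ small enough that the third term is also $\le\epsilon/3$. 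Adding up yields $\|f-\Phi\|_{L^p}\le\epsilon$.

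Finally I would rewrite the width in the stated form. Using $\lceil\log_2\lceil x\rceil\rceil=\lceil\log_2 x\rceil$ (valid for $x\ge 1$, which we may assume $\sqrt d(3\mu/\epsilon)^{1/\alpha}$ to satisfy, the claim being trivial otherwise) gives $2d\lceil\log_2N\rceil=2d\lceil\log_2(\sqrt d(3\mu/\epsilon)^{1/\alpha})\rceil$; and peeling a factor $2$ out of the logarithm gives $2M=2\lceil\log_2(3\mu d^{\alpha/2}/\epsilon)\rceil=2\lceil\log_2(3\mu d^{\alpha/2}/(2\epsilon))\rceil+2$, so the width equals $\max\{2d\lceil\log_2(\sqrt d(3\mu/\epsilon)^{1/\alpha})\rceil,2\lceil\log_2(3\mu d^{\alpha/2}/(2\epsilon))\rceil+2\}$, while the depth remains $6$. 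The only point requiring a word of care — not really an obstacle — is that $\delta$ does not appear in the statement: one must note that for fixed $N$ the Lebesgue measure of the excluded region $\Omega(N,\delta,d)$ tends to $0$ with $\delta$, so this third error contribution can be made negligible without altering $M$, $N$, the depth, or the width.
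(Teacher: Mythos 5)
Your proposal is correct and follows essentially the same route as the paper: apply Theorem~\ref{Lp general result}, bound $\omega_f(r)\le\mu r^{\alpha}$, and balance the three error terms at $\epsilon/3$ each by the choice of $M$, $N$ and a sufficiently small $\delta$. In fact your version is slightly more careful than the paper's one-line proof, since you handle the integrality of $M,N$ and the identity $2\lceil\log_2(3\mu d^{\alpha/2}/\epsilon)\rceil=2\lceil\log_2(3\mu d^{\alpha/2}/(2\epsilon))\rceil+2$ explicitly.
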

\begin{proof}
Let the parameters
\begin{equation*}
\mu d^{\alpha/2}\cdot 2^{-M}=\mu\left(\frac{\sqrt{d}}{N}\right)^{\alpha}=
\mu d^{\alpha/2}\left[1-(1-N\delta)^d\right]^{1/p}=\frac{\epsilon}{3}
\end{equation*}
in Theorem $\ref{Lp general result}$, we obtain the desired result.
\end{proof}

\begin{Corollary} \label{pointwise Lip result}
	\sloppy For any $f\in\mathcal{H}_{\mu}^{\alpha}([0,1]^d)$ and $\epsilon>0$, there exists a $\relu$-sine-$2^x$ network $\Phi$ with $\mathcal{L}(\Phi)=2d+6, \mathcal{W}(\Phi)=3^d\left[\max\left\{2d\left\lceil\log_2(\sqrt{d}\left(\frac{3\mu}{\epsilon}\right)^{1/{\alpha}})\right\rceil,2\left\lceil\log_2\frac{3\mu d^{\alpha/2}}{2\epsilon}\right\rceil+2\right\}+4\right]$ such that
\begin{equation*}
|f(\mathbf{x})-\Phi(\mathbf{x})|\leq \epsilon,\quad\mathbf{x}\in[0,1]^d.
\end{equation*}
\end{Corollary}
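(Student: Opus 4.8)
The plan is to derive the corollary directly from Theorem~\ref{pointwise general result} by inserting the Hölder bound $\omega_f(r)\le\mu r^{\alpha}$ and then choosing the three free parameters $M$, $N$, $\delta$ so that each of the three resulting error terms is at most $\epsilon/3$. Concretely, Theorem~\ref{pointwise general result} supplies, for every $M,N\in\mathbb{N}^{+}$ and $\delta>0$, a $\relu-\mathrm{sine}-2^x$ network $\Phi$ with $\mathcal{L}(\Phi)=2d+6$ and $\mathcal{W}(\Phi)=3^d(\max\{2d\lceil\log_2N\rceil,2M\}+4)$ such that
\[
|f(\mathbf{x})-\Phi(\mathbf{x})|\le\omega_f(\sqrt d)\,2^{-M}+\omega_f\!\left(\tfrac{\sqrt d}{N}\right)+d\,\omega_f(\delta),\qquad\mathbf{x}\in[0,1]^d,
\]
and since $f\in\mathcal{H}_\mu^\alpha([0,1]^d)$ this is bounded by $\mu d^{\alpha/2}2^{-M}+\mu(\sqrt d/N)^{\alpha}+d\mu\delta^{\alpha}$.

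Next I would force the first two terms to equal $\epsilon/3$. Solving $\mu d^{\alpha/2}2^{-M}=\epsilon/3$ gives $M=\lceil\log_2(3\mu d^{\alpha/2}/\epsilon)\rceil$, and solving $\mu(\sqrt d/N)^{\alpha}=\epsilon/3$ gives $N=\lceil\sqrt d(3\mu/\epsilon)^{1/\alpha}\rceil$; taking ceilings only shrinks those two terms, so each stays $\le\epsilon/3$. A short computation, using $\lceil y\rceil+1=\lceil y+1\rceil$ and $\lceil\log_2\lceil x\rceil\rceil=\lceil\log_2 x\rceil$ for $x\ge1$, shows that $2M=2\lceil\log_2\frac{3\mu d^{\alpha/2}}{2\epsilon}\rceil+2$ and $2d\lceil\log_2N\rceil=2d\lceil\log_2(\sqrt d(3\mu/\epsilon)^{1/\alpha})\rceil$, so $\mathcal{W}(\Phi)$ takes exactly the advertised form. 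Finally I would pick $\delta>0$ small enough that both $\delta\le\frac{1}{3N}$ (the restriction inherited from Proposition~\ref{expand to the whole region} via Theorem~\ref{pointwise general result}) and $d\mu\delta^{\alpha}\le\epsilon/3$; any sufficiently small $\delta$ works, and $\delta$ enters neither the width nor the depth. Adding the three bounds gives $|f(\mathbf{x})-\Phi(\mathbf{x})|\le\epsilon$ on all of $[0,1]^d$, with $\mathcal{L}(\Phi)=2d+6$.

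There is no substantive obstacle here; the only thing needing care is the arithmetic bookkeeping that converts the choices of $M$ and $N$ into precisely the stated width expression, together with the observation that the horizontal-shift error term $d\,\omega_f(\delta)$ can be absorbed into $\epsilon/3$ at no cost in network size because $\delta$ is a free parameter that does not appear in the final complexity bounds.
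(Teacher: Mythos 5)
Your proposal is correct and follows essentially the same route as the paper: apply Theorem~\ref{pointwise general result}, bound the modulus via $\omega_f(r)\le\mu r^{\alpha}$, and split the tolerance into three parts of size $\epsilon/3$ by the same choices of $M$, $N$, $\delta$. The only difference is that you spell out the ceiling bookkeeping and the constraint $\delta\le\tfrac{1}{3N}$, which the paper leaves implicit.
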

\begin{proof}
Applying Theorem $\ref{pointwise general result}$ and setting
\begin{equation*}
\mu d^{\alpha/2}\cdot 2^{-M}=\mu\left(\frac{\sqrt{d}}{N}\right)^{\alpha}=
\mu\delta^{\alpha} d=\frac{\epsilon}{3}
\end{equation*}
yields the result.
\end{proof}

The results in Corollary \ref{Lp Lip result} and
\ref{pointwise Lip result} show that  our proposed  $\relu$-sine-$2^x$ networks overcome the  curse of dimensionality {\color{blue} in approximation} on  H\"{o}lder  Class.
We should mention some related works on constructing  networks that break curse of dimensionality {\color{blue} in approximation}. In \cite{yarotsky2019phase}, to achieve accuracy $\epsilon$, a network with $\relu$ and any Lipschitz periodic activations with the total number of weights $\mathcal{O}\left(\log^2\frac{1}{\epsilon}\right)$ is built.
In \cite{shen2020neural}, the authors  constructed a three hidden layer network that achieves the same approximation power as the $\relu$-sine-$2^x$  network constructed here. They  use    floor, $2^x$ and step functions as activation functions.  In the consideration of applying SGD for training, they propose using "continuous version" activation functions, i.e.,  utilizing   piecewise linear  functions to  approximate   the floor and step activation   functions. The resulting  "continuous version" network still enjoy  the super expressive power. However, the directional derivative  of  the  piecewise linear  functions may blow up since it  depends   on $1/\epsilon$, see Table $\ref{compare}$.
In recent work of Yarotsky \cite{yarotsky2021elementary}, network with  $\{\sin,\arcsin\}$ activation is constructed to approximate  continuous functions $f\in C\left([0,1]^d\right)$ with precision $\epsilon$. The main feature of the $\sin$-$\arcsin$ network is that   the size is $\mathcal{O}(d^2)$ and independent on $\epsilon$.  Hence such a network overcomes curse of dimensionality {\color{blue} in approximation}. We summarize  the  related works in Table $\ref{compare}$.
$\ref{compare}$.
\begin{table}[ht!]	
	\centering
	\begin{threeparttable}
		\caption{Previous works and our result ($\epsilon$ denotes the approximation accuracy)}
		\label{compare}
		\setlength{\tabcolsep}{0.5mm}{
		\begin{tabular}{ccccc}
			\toprule
			Paper&Function class&Activation(s)&Depth&Width\cr
			\midrule
			\cite{yarotsky2017error}&$C^s\left([0,1]^d\right)$&$\relu$&$\mathcal{O}\left({\color{blue}\log d}\log\frac{1}{\epsilon}\right)$&\tabincell{c}{$\mathcal{O}\left({\color{blue}d^{s+1}\log d}\left(\frac{1}{\epsilon}\right)^{d/s}\log\frac{1}{\epsilon}\right)$ \\ (weights)}\cr
			\cite{yarotsky2019phase}&$C^s\left([0,1]^d\right)$($s\geq1$)&$\relu$, sine&$\mathcal{O}\left({\color{blue}d}\log\frac{1}{\epsilon}\right)$&$\mathcal{O}\left({\color{blue}d2^d}\log\frac{1}{\epsilon}\right)$\cr
			\cite{shen2020deep}&$\mathcal{H}_{\mu}^{\alpha}\left([0,1]^d\right)$&$\relu$, floor&$256d+3$&$\mathcal{O}\left({\color{blue}d}\epsilon^{\frac{-1}{2\alpha}}\right)$\cr
			\cite{shen2020neural}&$\mathcal{H}_{\mu}^{\alpha}\left([0,1]^d\right)$&$\rho_1,\rho_2,\rho_3$\tnote{1}&4&$\mathcal{O}\left({\color{blue}d\log d}\log\frac{1}{\epsilon}\right)$\cr
\cite{yarotsky2021elementary} & $C\left([0,1]^d\right)$	&	$\{\sin,\arcsin\}$&  \quad &  $\mathcal{O}(d^2)$ ($\mathrm{size}$)\cr	
this paper&$\mathcal{H}_{\mu}^{\alpha}\left([0,1]^d\right)$&$\relu$, sine and $2^x$&6&$\mathcal{O}\left({\color{blue}d\log d}\log\frac{1}{\epsilon}\right)$\cr
			\bottomrule
		\end{tabular}
	}
	\begin{tablenotes}
		\footnotesize
		\item[1] $ \varrho_{1, \delta}(x)=\left\{\begin{array}{ll}
n-1, & x \in[n-1, n-\delta], \\
(x-n+\delta) / \delta, & x \in(n-\delta, n],
\end{array} \quad \text { for any } n \in \mathbb{Z}\right.$, $\rho_2=3^x$,\textcolor{white}{a}
$ \varrho_{3}(x):=\tilde{\mathcal{T}}(\cos (2 \pi x)),$
$\widetilde{T}(x):=\left\{\begin{array}{ll}
0, & x \in\left(\cos \left(\frac{4 \pi}{9}\right), \infty\right), \\
1-x / \cos \left(\frac{4 \pi}{9}\right), & x \in\left[0, \cos \left(\frac{4 \pi}{9}\right)\right] \\
1, & x \in(-\infty, 0)
\end{array}\right..$
	\end{tablenotes}
	\end{threeparttable}
\end{table}

\begin{Remark}
Comparing with Corollary $\ref{Lp Lip result}$, there is an additional constant factor which is exponentially depending on dimension $d$ in the width of network in Corollary $\ref{pointwise Lip result}$. The factor $3^d$ is introduced by Proposition $\ref{expand to the whole region}$ since we want to expand Corollary $\ref{almost pointwise Lip result}$ to the whole region. Even so, a factor such as $\epsilon^{-d}$ does not appear in the depth and width of our network, which appears and leads to curse of dimensionality {\color{blue} in approximation} in many previous results of $\relu$ networks \cite{yarotsky2017error, suzuki2018adaptivity,shen2019deep}.

Furthermore, if we don't pursue pointwise accuracy and only interest in approximation in $L^p$ norm with $p\in [1,\infty)$,   Corollary $\ref{almost pointwise Lip result}$ and $\ref{Lp Lip result}$ provide   powerful and practical results, where  the depth are $6$ and the constant factors in width  only  depend on dimension $d$ at most in terms of $d\log d$.

 An important issue in practical learning tasks such as classification and regressions is to determine  the parameters in network $\Phi$ with  data.
 Since  $\Phi$ are (generalized) differentiable
   \cite{clarke1990optimization,berner2019towards},  we can utilize the  workhorse  SGD for training.
\end{Remark}

\begin{Remark}
We now compare our $\relu$-sine-$2^x$ network with the $\relu$-sine network appearing in \cite{yarotsky2019phase}. The depth and width of the former are $6$ and $\mathcal{O}\left(\log_2\frac{1}{\epsilon}\right)$, respectively (Corollary $\ref{Lp Lip result}$) while the depth and width of the latter are both $\mathcal{O}\left(\log_2\frac{1}{\epsilon}\right)$. Despite the width of two networks are of the same order, the depth of our network, a constant being independent of approximation error and dimension, is much less than the $\relu$-sine network in \cite{yarotsky2019phase}.
\end{Remark}

{\color{blue}
\section{Numerical Experiment}
In this section we will give two simple examples to show the approximation ability of the proposed deep neural network. Let
\begin{equation}
\begin{array}{c}
f_{1}=\prod_{1}^{d} \sin(\pi x_{i}), \;\;
f_{2}=\prod_{1}^{d} x_{i}^{2},
\end{array}
\end{equation}
with space dimension $d = 3$. The loss function is chosen as the least square:
\begin{equation}
\mathcal{L}_{i}(\Phi)=\mathop{\mathbb{E}}_{X\sim U([0,1]^{d})}\left[\Phi(X)^{2}-2\Phi(X)f_{i}(X)\right]\quad i=1,2,
\end{equation}
where $U([0,1]^{d})$ stands for the uniform distribution on $[0,1]^{d}$. Then we use the stochastic gradient decent (SGD) type algorithm to minimize the loss by taking samples $X_{i}\sim U([0,1]^{d})$. In our experiments, we use the Adam \cite{2014adam} optimizer with $1e5$ epochs and $1e5$ batch size. The learning rate is initially set to be $3e-3$ and reduced by $0.99$ in every $5000$ epochs.

The construction of the network is as Figure \ref{figure}. The width of the first layer is $4d$. The second layer and the first layer are fully connected in each dimension. Before the activation of the third layer, a truncation is applied to avoid the exponential blow-up. The width of the fourth layer is $8$. There are $12d+25$ neurons in total.

The result is shown in Figure \ref{figureresult}. The first row plots the landscape of $\Phi$ on the diagonal line of $[0,1]^{d}$ and its reference. The second row is the $L^{2}$ error of the approximation: $\|\Phi-f_{i}\|_{L^{2}}$. One may find that the SGD algorithm successfully minimizes the loss in the proposed neural network architecture.
\begin{figure}[H]
	\centering
	\includegraphics[width=0.4\textwidth]{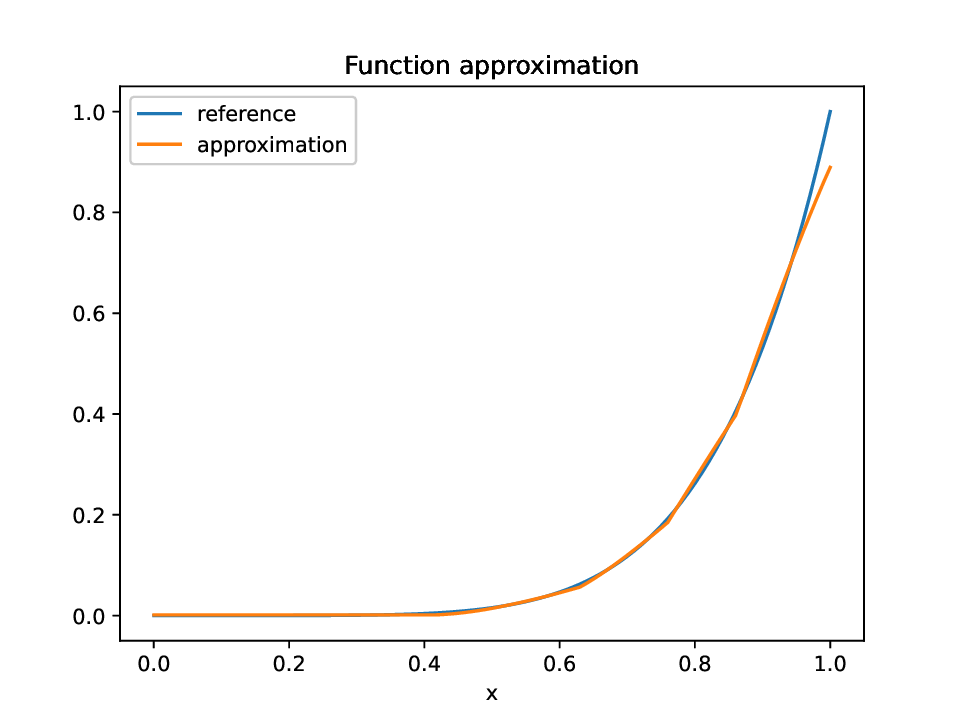}\quad
	\includegraphics[width=0.4\textwidth]{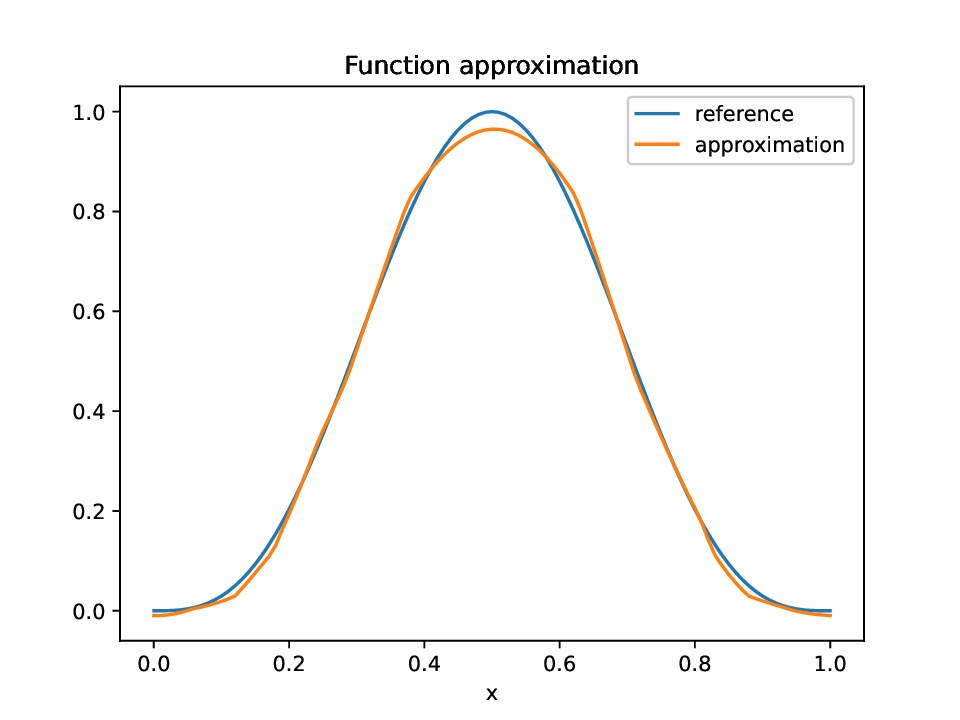}\quad
	\includegraphics[width=0.4\textwidth]{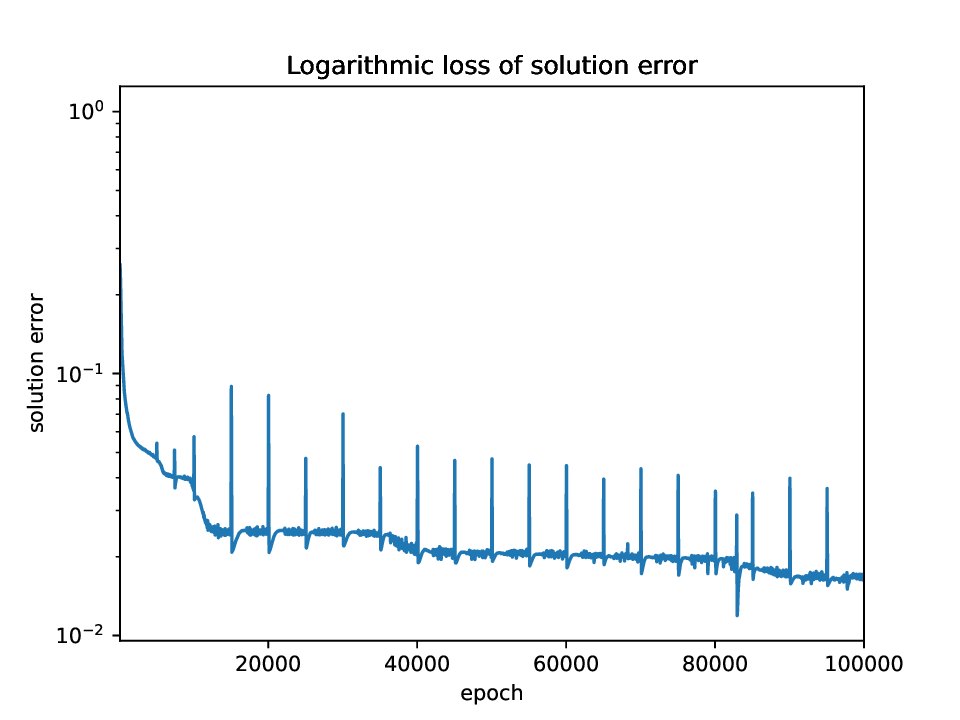}\quad
	\includegraphics[width=0.4\textwidth]{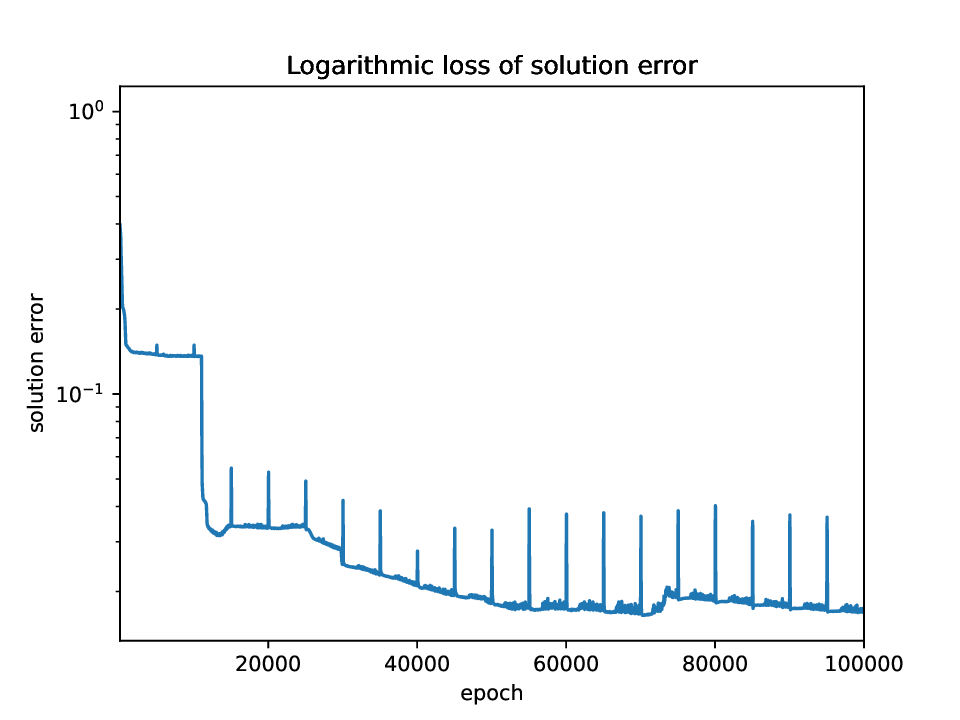}
	\caption{The numerical result of SGD optimization. The first row demonstrates the landscape of $\Phi$ on the diagonal line of $[0,1]^{d}$. The second row is the approximation error in $L^{2}$ norm.}
	\label{figureresult}
\end{figure}
}

\section{Conclusion}
In this paper, we construct neural networks with ReLU, sine and $2^x$ as activation functions that overcome the curse of dimensionality {\color{blue} in approximation} on the H\"{o}lder  continuous function class  defined on $[0,1]^d$.
The proposed $\relu$-sine-$2^x$  network functions   are (generalized) differentiable, enabling us to apply SGD to train in practical learning tasks.

There are several avenues for further study. First, due to the theoretical advantages established here, the practical performances of the $\relu$-sine-$2^x$ networks in real world applications  deserves careful evaluations.
Second, whether or not the generalization errors of $\relu$-sine-$2^x$ networks in supervised learning can break  the curse of dimensionality {\color{blue} in approximation} on number of samples   is also  of immense current interest.

\section*{Acknowledgement}
The authors would like to thank the anonymous referees for several
constructive comments, which have led to an improved presentation.
This work is supported by the National Key Research and Development Program
of China (No. 2020YFA0714200), by the National Science Foundation of China (No.
12125103, No. 12071362, No. 11971468, No. 11871474, No.11871385). The numerical calculations
have been done at the Supercomputing Center of Wuhan University.
\bibliographystyle{siam}
\bibliography{mybibfile}

\end{document}